\def\isarxiv{1} 
\renewcommand*{\citet}{\cite}
\renewcommand*{\citep}{\cite}
\theoremstyle{plain}
\newtheorem{theorem}{Theorem}[section]
\newtheorem{lemma}[theorem]{Lemma}
\newtheorem{definition}[theorem]{Definition}
\newtheorem{remark}[theorem]{Remark}
\newcommand{\R}{\mathbb{R}}
\renewcommand{\d}{\mathrm{d}}
\DeclareMathOperator{\diag}{diag}
\newcommand*{\RN}[1]{\expandafter\@slowromancap\romannumeral #1@}
\begin{document}


\ifdefined\isarxiv
\date{}
\title{Advancing the Understanding of Fixed Point Iterations in Deep Neural Networks: A Detailed Analytical Study}

\author{
Yekun Ke\thanks{\texttt{ keyekun0628@gmail.com}. Independent Researcher.}
\and
Xiaoyu Li\thanks{\texttt{
xli216@stevens.edu}. Stevens Institute of Technology.}
\and 
Yingyu Liang\thanks{\texttt{
yingyul@hku.hk}. The University of Hong Kong. \texttt{
yliang@cs.wisc.edu}. University of Wisconsin-Madison.} 
\and
Zhenmei Shi\thanks{\texttt{
zhmeishi@cs.wisc.edu}. University of Wisconsin-Madison.}
\and 
Zhao Song\thanks{\texttt{ magic.linuxkde@gmail.com}. The Simons Institute for the Theory of Computing at the University of California, Berkeley.}
}

\begin{titlepage}
  \maketitle
  \begin{abstract}
Recent empirical studies have identified fixed point iteration phenomena in deep neural networks, where the hidden state tends to stabilize after several layers, showing minimal change in subsequent layers. This observation has spurred the development of practical methodologies, such as accelerating inference by bypassing certain layers once the hidden state stabilizes, selectively fine-tuning layers to modify the iteration process, and implementing loops of specific layers to maintain fixed point iterations. Despite these advancements, the understanding of fixed point iterations remains superficial, particularly in high-dimensional spaces, due to the inadequacy of current analytical tools.
In this study, we conduct a detailed analysis of fixed point iterations in a vector-valued function modeled by neural networks. 
We establish a sufficient condition for the existence of multiple fixed points of looped neural networks based on varying input regions. 
Additionally, we expand our examination to include a robust version of fixed point iterations. 
To demonstrate the effectiveness and insights provided by our approach, we provide case studies that looped neural networks may exist $2^d$ number of robust fixed points under exponentiation or polynomial activation functions, where $d$ is the feature dimension. 
Furthermore, our preliminary empirical results support our theoretical findings. 
Our methodology enriches the toolkit available for analyzing fixed point iterations of deep neural networks and may enhance our comprehension of neural network mechanisms.

  \end{abstract}
  \thispagestyle{empty}
\end{titlepage}

{\hypersetup{linkcolor=black}
\tableofcontents
}
\newpage

\else
\twocolumn[
\aistatstitle{Advancing the Understanding of Fixed Point Iterations in Deep Neural Networks: A Detailed Analytical Study}
\aistatsauthor{Author 1 \And Author 2 \And Author 3}
\aistatsaddress{Institution 1 \And Institution 2 \And Institution 3}
]
\begin{abstract}

\end{abstract}

\fi

\section{INTRODUCTION}

Deep neural networks have achieved remarkable success and are widely employed in various applications, including ChatGPT \citep{o23}, face recognition \citep{wd21}, and personalized recommendation systems \citep{ds20}, among others. These networks typically consist of numerous hidden layers; for instance, Residual Networks (ResNets) \citep{hzrs16} can contain over 1,000 layers.

Recent empirical studies reveal that, despite the numerous layers in deep neural networks, certain operational phases exist where adjacent layers may perform identical operations \citep{mbab22, smn+24}. Consequently, we can focus on modifying specific associated layers during inference to enhance performance. Furthermore, the hidden states tend to stabilize after several adjacent layers, resulting in minimal changes in subsequent layers, allowing us to skip certain layers during inference \citep{esl+24}.
Additional research indicates that a looped transformer—where its output is fed back into itself iteratively—exhibits expressive capabilities comparable to programmable computers \citep{grs+23} and is more effective at learning algorithms \citep{ylnp23}. Other studies have also examined the convergence of deep neural networks when the weights across different layers are nearly identical, with only minor perturbations \citep{xz22, xz24}.
Collectively, these findings suggest the relevance of fixed-point iteration (Definition~\ref{def:fixed_point_rd}). Employing fixed-point methods in deep or looped neural networks may offer several advantages, such as reducing the number of parameters and dynamically adjusting runtime based on the complexity of the problem.

Despite these advancements, our understanding of fixed-point iterations remains limited, especially in high-dimensional spaces, due to the limitations of existing analytical tools. It remains challenging to determine when a neural network can effectively approximate a fixed-point solution and how many layers or iterations are required to ensure a good outcome.

Thus, it is natural to ask the following question:
\begin{center}
    {\it How shall we analytically study fixed point iterations in deep neural networks?}
\end{center}
In this study, we conduct a detailed analysis of fixed point iterations in a vector-valued function, $\R^d \to \R^d$ where $d$ is the hidden feature size, modeled by neural networks. 
We establish a general theorem (Theorem~\ref{thm:general_theorem}) to describe a sufficient condition for the existence of multiple fixed points of looped neural networks (Definition~\ref{def:loop_nn}) based on varying input regions. 
Then, we expand our examination by introducing noise during each fixed point iteration and show that the fixed point iteration process is robust under noise (Theorem~\ref{thm:robust_fix_point_thm_sca}). It represents deep neural networks with residue connection~\citep{hzrs16}, where after each layer, the hidden states are only perturbed slightly. 
Finally, we demonstrate the effectiveness of our approach by studying looped neural networks under polynomial and exponential activation functions (Theorem~\ref{thm:poly} and Theorem~\ref{thm:exponential}). 
We show that the looped neural networks may exist $2^d$ number of robust fixed points. 
Recall that the previous tools can only handle single fixed point analysis, while our analysis can be applied to more practical cases. 
Furthermore, our preliminary empirical results support our theoretical findings (Section~\ref{sec:exp}).
Our methodology enriches the toolkit available for analyzing fixed point iterations of vector-valued functions and may help us better understand neural network mechanisms.

\paragraph{Our contributions:}
\begin{itemize}
\item We study the fixed point iteration in looped neural networks and provide a general theorem (Theorem~\ref{thm:general_theorem}) to describe a sufficient condition for the existence of multiple fixed points. 
\item 
We also establish a robust version of fixed
point iterations with noise perturbation (Theorem~\ref{thm:robust_fix_point_thm_sca}). 
\item We provide two case studies, where looped neural networks may have $2^d$ number of robust fixed points, which demonstrates the effectiveness of our approach (Theorem~\ref{thm:poly} and Theorem~\ref{thm:exponential}). 
\item Our preliminary empirical results validate our theoretical findings (Section~\ref{sec:exp}).
\end{itemize}

{\bf Roadmap.} Our paper is organized as follows. In Section~\ref{sec:related_work}, we review related literature. In Section~\ref{sec:preliminary}, we present the preliminary of our notations, Banach fixed point theorem and our definition of Looped Neural Network. In Section~\ref{sec:main_result}, we outline the main results of this work. In Section~\ref{sec:case study}, We present case analysis results of fixed-point iterations for neural networks using two types of activation functions. In Section~\ref{sec:exp}, we present the experiment result of this work. In Section~\ref{sec:conclusion}, we conclude our paper. 
\section{RELATED WORK}\label{sec:related_work}
In Section~\ref{sec:related_fixed_point_iteration}, we introduce fixed point theory with a focus on the Banach fixed point theorem. In Section~\ref{sec:related_looped_nn}, we present some work on incorporating looped structures into neural networks. In Section~\ref{sec:related_nn_fixed_point_iteration}, we introduce some works that utilize the properties of fixed point iterations in neural network computations.
\subsection{Fixed Point Iteration Methods}\label{sec:related_fixed_point_iteration}
In numerical analysis, fixed point iteration methods \citep{amo01,i01,gd03,ksr+15} use the concept of fixed points to compute the solution of a given equation in a repetitive manner. Many works have focused on the convergence properties of fixed point iteration methods. For example, Banach fixed point theorem \citep{ah09} gives a sufficient condition under which a unique fixed point exists and is approachable via iterative methods. Although there are other fixed-point theorems, Banach fixed point theorem, in particular, is useful because it provides a clear criterion for fixed points using contraction mappings. If a function is a contraction, the theorem guarantees the existence and uniqueness of a fixed point, making it easier to work with than other fixed-point theorems that may have more complex conditions.

Recent works focus on employing various methods to accelerate the convergence of fixed-point iterations.
For instance, \cite{zal11} proposed a Quasi-Newton method for accelerating fixed-point iterations by approximating the Jacobian in Newton's method, enabling efficient root-finding for the function $g(x) = x -f(x)$. \cite{wn11} presents Anderson acceleration, an underutilized method for enhancing fixed-point iterations. After this work, \cite{zob20} introduces a globally convergent variant of type-I Anderson acceleration for non-smooth fixed-point problems, improving terminal convergence of first-order algorithms. 
\subsection{Looped Neural Networks}\label{sec:related_looped_nn}
Looped Neural Networks are a paradigm in deep learning that aims to address certain limitations of traditional feedforward architectures. The addition of loopy structures to traditional neural networks has already received extensive research. For example, \cite{csw16} introduces looped Convolutional Neural Network which unrolls over multiple time steps and demonstrate that these networks outperform deep feedforward networks on some image datasets.

Transformers \citep{vsp+17,csy23b,lss+24_multi_layer} have become the preferred model of choice in natural language processing (NLP) and other domains that require sequence-to-sequence modeling. To understand why Transformers excel at iterative inference while lacking an iterative structure, \cite{grs+23} proposes a Looped Transformer and has shown that transformer networks can be used as universal computers by programming them with specific weights and placing them in a loop. 
\cite{gsr+24} investigates the learnability of linear looped Transformers for linear regression, showing they can converge to algorithmic solutions via multi-step preconditioned gradient descent with adaptive preconditioners. \cite{gzx+24,fdrl24,xs24,cll+24,lss+24_relu} argued that the Looped Transformer could achieve significantly higher algorithmic representation capabilities and good generalization ability while using the same number of parameters compared to the standard Transformer.

\subsection{Neural Networks as Fixed Point Iterations}\label{sec:related_nn_fixed_point_iteration}
To better understand the convergence property and stability of neural networks, many works investigate neural networks as fixed-point iteration processes. The research in this area can be traced back to \cite{hae97}. This work demonstrates how a neural network learning rule can be converted into a fixed-point iteration, resulting in a simple, parameter-free algorithm that converges quickly to the optimal solution allowed by the data.
Recently, many researchers have found that the hidden layers of many deep networks converge to a fixed point (a stable state). Based on this, treating neural networks as fixed points has received extensive research. For instance, \cite{bkk19} proposes Deep Equilibrium Model (DEQ),  a novel approach to modeling sequential data by directly solving for fixed points. Instead of gradually approximating this fixed point through iterations, DEQ uses a black-box root-finding method to solve it directly, bypassing intermediate steps. \citep{ylnp23} proposes a new training methodology for looped transformers to effectively emulate iterative algorithms, optimizing convergence by adjusting input injection and loop iterations while using significantly fewer parameters than standard transformers. It demonstrates that the looped Transformer is more effective at learning learning algorithms such as in-context learning. There are many other works used fixed point iterations in deep learning \citep{jlc21,hgm+21,cmk15}.

We refer the readers to some other related works \cite{qgt+19,rsz22, gsx23, rhx+23,  hwsl24, dsx23,dsxy23,qszz23,qsy23,as24,as24b,whl+24,aszz24,lssy24,llss24_sparsegpt,syz24,szz24}
\section{PRELIMINARY}\label{sec:preliminary}

In this section, we introduce some definitions that will be
used throughout the paper in Section~\ref{sub:notations}. Then we briefly review the fixed point method and Banach fixed-point theorem in Section~\ref{sub:fixed_point_method}. Finally, we define the looped neural network in Section~\ref{sub:looped_neural_networks}, which is the primary focus of our work.
\subsection{Notations}\label{sub:notations}
For a vector $v \in \R^d$, we use $\|v\|_1, \|v\|_2$, and $\|v\|_\infty$ to denote the $\ell_1$-norm, $\ell_2$-norm, and $\ell_\infty$-norm of $v$, respectively. For two vector $u, v \in \R^d$, we use $\langle u, v\rangle$ to denote the standard inner product of $u$ and $v$. We use ${\bf 1}_{d}$  to denote a vector whose elements are all $1$.

\subsection{Fixed Point Methods}\label{sub:fixed_point_method}
\begin{figure*}[!ht]
    \centering
    \includegraphics[width=0.49\linewidth]{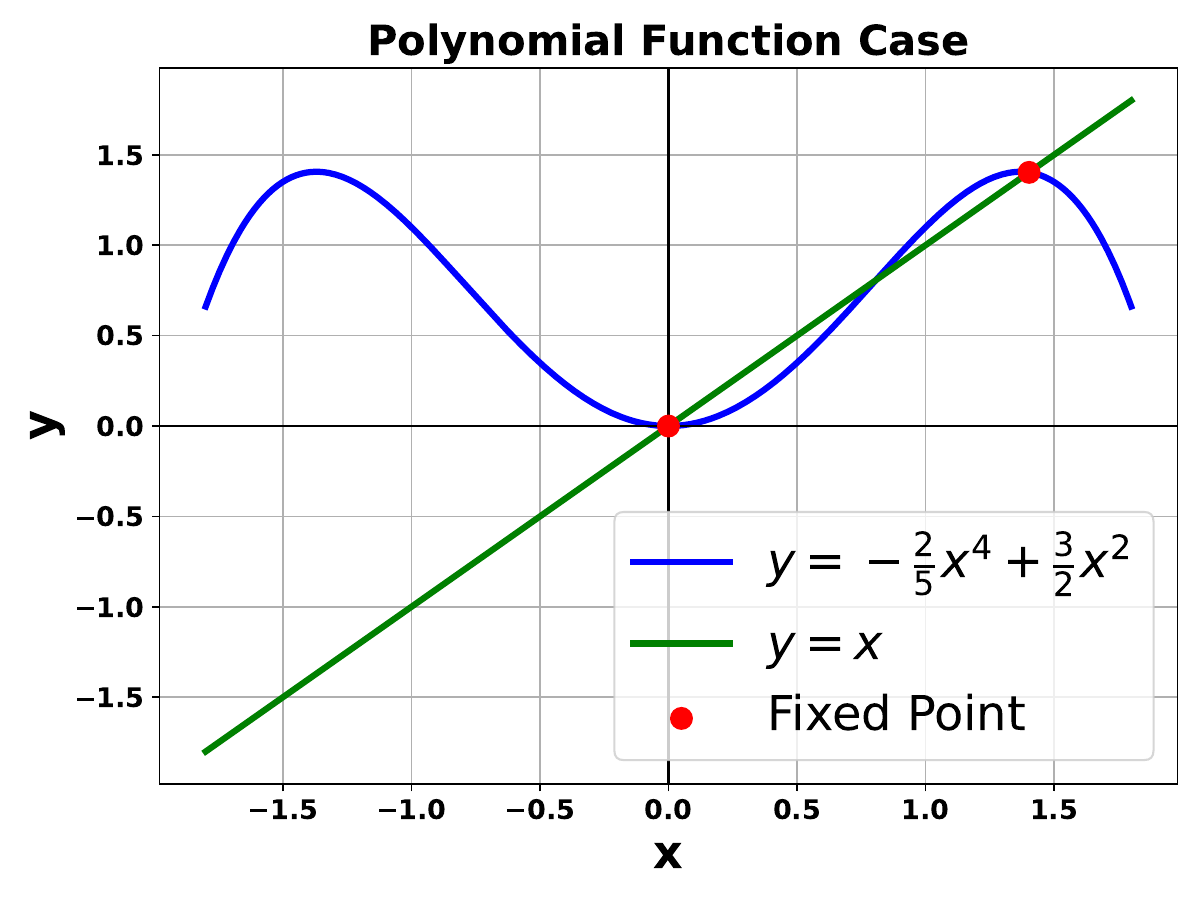}
    \includegraphics[width=0.49\linewidth]{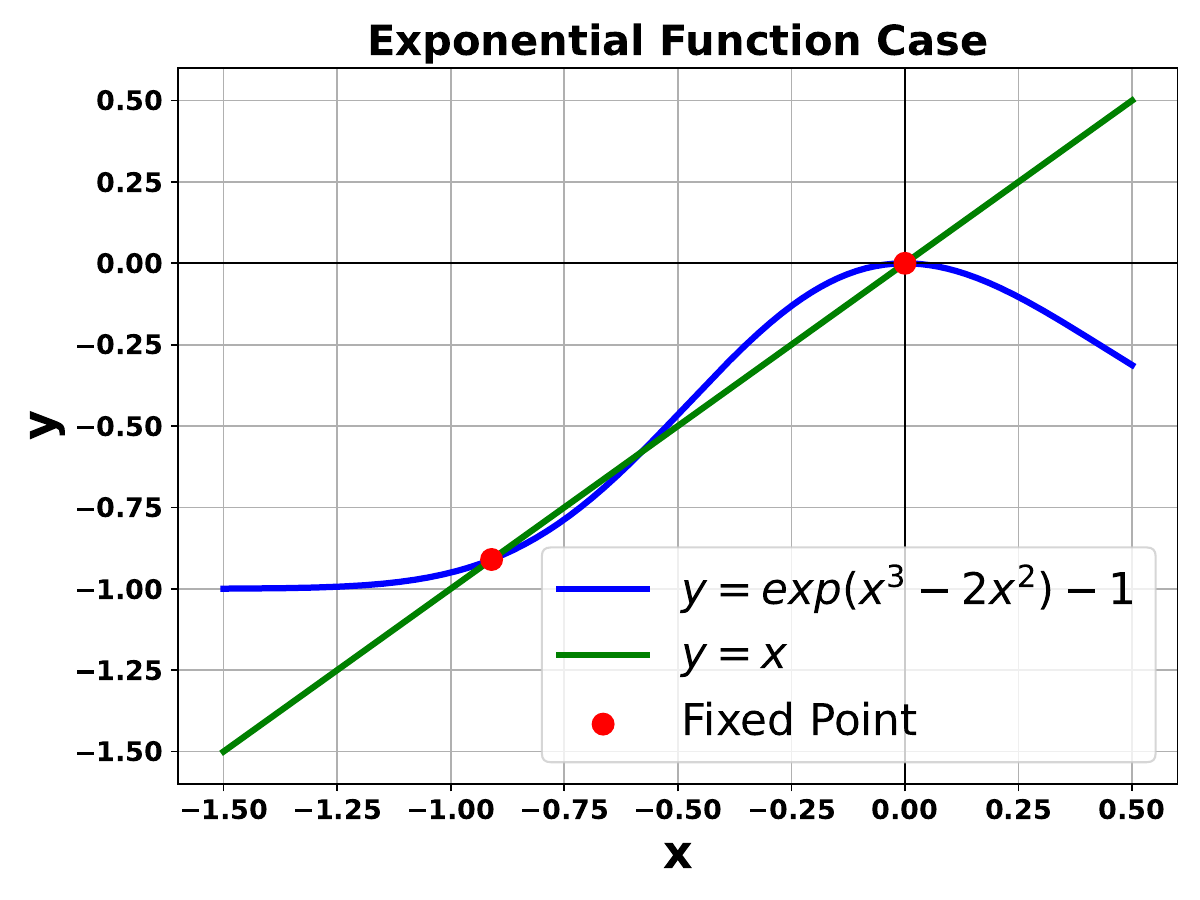}
    \caption{Example of polynomial (left) and exponential (right) functions contain at least two fixed points (red dots), and points near these fixed points will converge to them under fixed-point iteration.}
    \label{fig:2d}
\end{figure*}
In this section, we introduce the concept of the fixed-point method and the well-known Banach fixed-point theorem.
Here, we only deal with the case where the space is $\R^d$. In Appendix~\ref{sec:app_fixed_point}, we state the original definitions and theorems in the context of Banach spaces.

We first introduce the fixed point iteration problem. 
\begin{definition}[Fixed point, \cite{ah09}]\label{def:fixed_point_rd}
 Let $D$ be a subset of $\R^d$. We say a function $f: D \to \R^d$ has a fixed point $p \in D$ if 
     $f(p) = p.$
\end{definition}
Then, we introduce contractive mapping, which is a key concept in fixed point iteration convergence. 
\begin{definition}[Contractive mapping, Definition 5.1.2 of~\cite{ah09}]\label{def:contractive_rd}
Let $\|\cdot \|$ be a norm on $\R^d$.
Let $D$ be a subset of  $\R^d$. We say that a function $f: D \to \R^d$ is contractive with contractivity constant $K \in [0,1)$ if
\begin{align*}
\|f(x) - f(x')\| \leq K \|x - x'\| , ~ \forall x, x' \in V.
\end{align*}
\end{definition}
Contractive mapping means that the function output space becomes `smaller' than its input space. Then, we are able to introduce the Banach fixed-point theorem, the key tool in this work. 

\begin{lemma}[Banach fixed-point theorem, Theorem 5.1.3 of~\cite{ah09}]\label{lem:banach_fixed_point_rd}
Let $\|\cdot\|$ be a norm on $\R^d$. Let $D$ be a nonempty closed set of $\R^d$. Suppose that $f : D \to \R^d$ is a mapping that satisfies the following
\begin{itemize}
    \item $f(x) \in D$ whenever $x \in D$.
    \item $f$ is contractive with contractivity constant $K \in [0, 1)$.
\end{itemize}
Then it holds that
\begin{itemize}
    \item The function $f$ has a unique fixed point $p \in D$.
    \item For any initial point $x^{(0)} \in D$, the fixed-point iteration $x^{(t)} = f(x^{(t-1)})$, $t \geq 1$, converges to the fixed point $p$ as $t \to \infty$.
    \item The following error bounds hold:
    \begin{align*}
        \|x^{(t)} - p \| \leq&~ \frac{K^t}{1-K} \|x^{(1)} - x^{(0)}\|,\\
    \|x^{(t)} - p \| \leq&~ \frac{K}{1-K} \|x^{(t)} - x^{(t-1)}\|,\\
    \|x^{(t)} - p \| \leq&~ K \|x^{(t-1)} - p \|.
    \end{align*}
\end{itemize}
\end{lemma}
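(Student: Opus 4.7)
\textbf{Proof plan for Lemma~\ref{lem:banach_fixed_point_rd}.} The plan is a standard Cauchy-sequence argument enabled by the contraction property. First I would dispose of \emph{uniqueness}: if $p$ and $q$ are both fixed points in $D$, then applying contractivity directly to $\|p-q\| = \|f(p)-f(q)\| \leq K\|p-q\|$ forces $(1-K)\|p-q\|\le 0$, and since $K<1$ we get $p=q$. This is a warm-up that already uses the full strength of $K<1$.

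Next I would establish \emph{existence} by analyzing the iteration $x^{(t)} = f(x^{(t-1)})$ starting from an arbitrary $x^{(0)}\in D$. The first step is the telescoping bound $\|x^{(t+1)}-x^{(t)}\| = \|f(x^{(t)})-f(x^{(t-1)})\| \le K\|x^{(t)}-x^{(t-1)}\|$, and iterating yields $\|x^{(t+1)}-x^{(t)}\| \le K^{t}\|x^{(1)}-x^{(0)}\|$. Combining this with the triangle inequality for any $m>t$ and summing the geometric series gives
\begin{align*}
\|x^{(m)} - x^{(t)}\| \le \sum_{j=t}^{m-1} K^{j}\|x^{(1)}-x^{(0)}\| \le \frac{K^{t}}{1-K}\|x^{(1)}-x^{(0)}\|,
\end{align*}
which tends to $0$ as $t\to\infty$, so $\{x^{(t)}\}$ is Cauchy. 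Since $\R^d$ is complete and $D$ is closed, the limit $p$ lies in $D$; passing to the limit in $x^{(t)}=f(x^{(t-1)})$ using the continuity of $f$ (which follows from contractivity) yields $f(p)=p$.

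For the three \emph{error bounds}, I would derive them in sequence. Taking $m\to\infty$ in the Cauchy estimate above gives $\|x^{(t)}-p\| \le \frac{K^{t}}{1-K}\|x^{(1)}-x^{(0)}\|$, which is the first bound. The third bound is immediate from contractivity applied to the pair $(x^{(t-1)},p)$: $\|x^{(t)}-p\| = \|f(x^{(t-1)})-f(p)\| \le K\|x^{(t-1)}-p\|$. For the middle bound, I would use triangle inequality $\|x^{(t)}-p\| \le \|x^{(t)}-x^{(t+1)}\| + \|x^{(t+1)}-p\| \le \|x^{(t+1)}-x^{(t)}\| + K\|x^{(t)}-p\|$ and rearrange to get $\|x^{(t)}-p\| \le \frac{1}{1-K}\|x^{(t+1)}-x^{(t)}\|$; one more application of contractivity on $\|x^{(t+1)}-x^{(t)}\|\le K\|x^{(t)}-x^{(t-1)}\|$ produces the stated factor $\frac{K}{1-K}$.

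The main obstacle is not really any single estimate — each step is a short calculation — but the structural point that we need $D$ closed and $\R^d$ complete simultaneously so that the Cauchy limit both exists and stays in the domain where $f$ is defined; without closedness of $D$, the limit could escape $D$ and $f(p)$ would be meaningless. The assumption that $f$ maps $D$ into $D$ is what keeps the whole iterate sequence inside $D$ in the first place, so it is worth flagging that both hypotheses from the lemma are used in this one passage to the limit.
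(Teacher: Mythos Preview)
Your proof is correct and is the standard Cauchy-sequence argument for the Banach fixed-point theorem. The paper does not actually supply a proof of this lemma: it is quoted verbatim as Theorem~5.1.3 of the cited textbook~\cite{ah09} and used as a black box throughout. So there is nothing to compare against beyond noting that your argument is exactly the classical one that reference would contain.
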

Lemma~\ref{lem:banach_fixed_point_rd} told us the sufficient condition for a single unique fixed point. 
Although checking for contractivity is usually difficult, for differentiable functions, it becomes easier by examining the Jacobian matrix. The following lemma is a corollary of Lemma~\ref{lem:banach_fixed_point_rd}.

\begin{lemma}[Banach fixed point theorem, vector case, informal version of Lemma~\ref{lem:fix_point_thm_vec:formal}]\label{lem:fix_point_thm_vec:informal}
Let $D \subseteq \R^d$ be a nonempty closed set. Suppose that $f: D \to \R^d$ is differentiable and satisfies the following:
\begin{itemize}
    \item $f(x) \in D$ whenever $x \in D$.
    \item There exists constant $K < 1$ such that for every $i \in [d]$,
    \begin{align*}
        \| \frac{\d f_i(x)}{\d x} \|_1 \leq K , ~ \forall x \in D
    \end{align*}
    where $f_i(x)$ is the $i$-th entry of $f(x)$.
\end{itemize}
Then it holds that
\begin{itemize}
    \item The function $f$ has a unique fixed point $p \in D$.
    \item For any initial point $x^{(0)} \in D$, the fixed point iteration $x^{(t)} = f(x^{(t-1)})$, $t \geq 1$, converges to the fixed point $p$ as $t \to \infty$.
    \item The following error bounds hold:
    \begin{align*}
        \|x^{(t)} - p \|_\infty \leq&~ \frac{K^t}{1-K} \|x^{(1)} - x^{(0)}\|_\infty,\\
    \|x^{(t)} - p \|_\infty \leq&~ \frac{K}{1-K} \|x^{(t)} - x^{(t-1)}\|_\infty,\\
    \|x^{(t)} - p \|_\infty \leq&~ K \|x^{(t-1)} - p \|_\infty.
    \end{align*}
\end{itemize}
\end{lemma}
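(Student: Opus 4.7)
The plan is to reduce this corollary to the scalar/abstract Banach statement (Lemma~\ref{lem:banach_fixed_point_rd}) by choosing the norm on $\R^d$ to be $\|\cdot\|_\infty$ and verifying the contractivity hypothesis from the pointwise Jacobian bound. The first bullet of Lemma~\ref{lem:banach_fixed_point_rd} (invariance $f(D)\subseteq D$) is assumed verbatim, and the three error bounds drop out of Lemma~\ref{lem:banach_fixed_point_rd} once the $\ell_\infty$ contractivity is established. So the entire content is the contractivity step.

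First I would fix arbitrary $x,x'\in D$ and any coordinate $i\in[d]$, and apply the one-dimensional fundamental theorem of calculus to the real-valued function $g_i(t):=f_i(x' + t(x-x'))$, $t\in[0,1]$, obtaining
\begin{align*}
f_i(x)-f_i(x') \;=\; \int_0^1 \Bigl\langle \tfrac{\d f_i}{\d x}\bigl(x'+t(x-x')\bigr),\; x-x'\Bigr\rangle\, \d t .
\end{align*}
Taking absolute values and using the elementary $|\langle u,v\rangle|\le \|u\|_1\|v\|_\infty$ bound, together with the hypothesis $\|\d f_i/\d x\|_1\le K$ on $D$, yields $|f_i(x)-f_i(x')|\le K\|x-x'\|_\infty$. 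Maximizing over $i$ gives $\|f(x)-f(x')\|_\infty \le K\|x-x'\|_\infty$, which is exactly contractivity with constant $K<1$ in the $\ell_\infty$ norm. Then I would invoke Lemma~\ref{lem:banach_fixed_point_rd} with the $\ell_\infty$ norm to conclude existence and uniqueness of the fixed point $p\in D$, convergence of the iteration $x^{(t)}=f(x^{(t-1)})$, and the three stated error estimates.

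The only subtle point, and the one I would flag explicitly, is that the integral representation above requires the entire segment $\{x'+t(x-x'):t\in[0,1]\}$ to lie in $D$ so that $\d f_i/\d x$ is defined (and bounded by $K$) along it. This is implicit in the informal statement; in the formal version one would either assume $D$ is convex, or assume that $f$ extends differentiably to an open convex neighborhood of $D$ on which the same Jacobian bound holds. Once that technicality is handled, the rest is a direct application of H\"older's inequality plus Lemma~\ref{lem:banach_fixed_point_rd}, so there is no substantial obstacle; the main work is simply packaging the row-$\ell_1$ Jacobian bound as the operator norm of $Df(x)$ as a map $(\R^d,\|\cdot\|_\infty)\to(\R^d,\|\cdot\|_\infty)$.
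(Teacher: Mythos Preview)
Your proposal is correct and follows essentially the same route as the paper: establish $\ell_\infty$-contractivity coordinate-by-coordinate via the bound $|f_i(x)-f_i(x')|\le \|\nabla f_i\|_1\|x-x'\|_\infty$ (H\"older), then invoke the abstract Banach fixed-point lemma. The only cosmetic difference is that the paper uses the scalar mean value theorem in place of your integral representation, and it silently assumes the segment stays in $D$ just as you flagged.
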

When $d = 1$, Lemma~\ref{lem:fix_point_thm_vec:informal} boils down to the scalar case.
\subsection{Looped Neural Networks}\label{sub:looped_neural_networks}
In this section, we give the formal definition of Looped Neural Network.

\begin{definition}[Looped Neural Network]\label{def:loop_nn}
Let $W = [w_1, \ldots, w_d]^\top \in \R^{d \times d}$ be a weight matrix and $b \in \R^d$ be the bias parameter. Let $g: \R \to \R$ be a differentiable activation function. We consider the one layer of neural network in the form
\begin{align*}
    f(x; W, b) := g(Wx+b)
\end{align*}
where $g$ is applied entry-wise. The $L$-layer looped neural network is defined as
\begin{align*}
    &~ \mathsf{NN}(x^{(0)}; W, b, L) := x^{(L)} \\
    &~ x^{(t)} := f(x^{(t-1)}; W, b), ~ \forall t \in [L].
\end{align*}
\end{definition}

\begin{remark}
Note that the Looped Neural Network (LNN) shares similarities with Recurrent Neural Networks (RNN) but is slightly different. RNN maps sequence to sequence, which means in each loop, RNN has new input data, while LNN does not. Furthermore, RNN generates output in each loop while LNN only outputs after all loops. 
\end{remark}

\section{MAIN RESULTS}\label{sec:main_result}
We first introduce our general theorem for looped neural networks in Section~\ref{sec:sub:general} and then introduce our robust version in Section~\ref{sec:sub:noise}.

\subsection{General Theorem}\label{sec:sub:general}

We have the following general theorem which provides a sufficient condition for the existence of multiple fixed points for Looped Neural Network.
\begin{theorem}[General result]\label{thm:general_theorem}
   Consider the $L$-layer looped neural network $\mathsf{NN}(x^{(0)}; W, b, L)$ defined in Definition~\ref{def:loop_nn}.
If the following conditions hold:
\begin{itemize}
    \item There exists disjoint $D_1, \ldots, D_m \subseteq \R^d$ such that for every $i \in [m]$, and for every $x \in D_i$, $\mathsf{NN}(x; W,b, 1) \in D_i$.
    \item The weight matrix $W$ and the activation function 
    $g$ satisfy:
    For every $i \in [m]$, there exists $K_i \in [0,1)$ such that for every $x \in D_i$, and for every $j \in [d]$,
    \begin{align*}
        |g'(\langle w_j, x \rangle)| \cdot \| w_j \|_1 \leq K_i.
    \end{align*}
\end{itemize}
Then, the following statements hold:
\begin{itemize}
    \item The single layer of the looped neural network, $f(x;W)$, has at least $m$ fixed-points $p_1, \ldots, p_m$ satisfying : For every $i \in [m]$, there exists a constant $\epsilon_i > 0$, for any initial point $x^{(0)} \in D_i$ with $\|x^{(0)} - p_i\|_\infty \leq \epsilon_i$, we have
    \begin{align*}
        \lim_{L\to\infty} \mathsf{NN}(x^{(0)}; W, b, L) = p_i.
    \end{align*}
    \item For every $i \in [m]$, there exists a constant $c_i > 0$ such that for any $L \geq 2$,
    \begin{align*}
        \| \mathsf{NN}(x^{(0)}; W, b, L) - p_i \|_\infty \leq K_i^L \cdot c_i \epsilon_i.
    \end{align*}
\end{itemize}
\end{theorem}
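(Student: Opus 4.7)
The plan is to apply the vector-valued Banach fixed point theorem (Lemma~\ref{lem:fix_point_thm_vec:informal}) separately on each region $D_i$ and assemble the resulting $m$ distinct fixed points.

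First, I would fix $i \in [m]$ and verify the two hypotheses of Lemma~\ref{lem:fix_point_thm_vec:informal} on the restriction of $f(\cdot; W, b)$ to $D_i$. The invariance hypothesis $f(x; W, b) \in D_i$ for all $x \in D_i$ is exactly the first condition of the theorem. For the contraction hypothesis, I would compute the Jacobian row by row: writing $f_j(x) = g(\langle w_j, x \rangle + b_j)$, the chain rule gives $\frac{\d f_j(x)}{\d x} = g'(\langle w_j, x \rangle + b_j) \cdot w_j$, so $\|\frac{\d f_j(x)}{\d x}\|_1 = |g'(\langle w_j, x \rangle + b_j)| \cdot \|w_j\|_1$, which is bounded by $K_i < 1$ uniformly on $D_i$ by the second hypothesis.

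Then Lemma~\ref{lem:fix_point_thm_vec:informal} immediately yields a unique fixed point $p_i \in D_i$ together with convergence of the iteration $x^{(t)} = f(x^{(t-1)}; W, b)$ to $p_i$ from every starting point in $D_i$. Because the regions $D_1, \ldots, D_m$ are pairwise disjoint, the corresponding fixed points $p_1, \ldots, p_m$ are pairwise distinct, giving at least $m$ fixed points of $f(\cdot; W, b)$ as claimed. The existence of a valid $\epsilon_i > 0$ in the convergence statement is then automatic: any positive $\epsilon_i$ for which $\{x \in D_i : \|x - p_i\|_\infty \leq \epsilon_i\}$ is nonempty works, and this holds for every $\epsilon_i > 0$ since $p_i \in D_i$.

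For the quantitative error bound, I would apply the third error estimate of Lemma~\ref{lem:fix_point_thm_vec:informal} iteratively to get $\|x^{(L)} - p_i\|_\infty \leq K_i \|x^{(L-1)} - p_i\|_\infty \leq \cdots \leq K_i^L \|x^{(0)} - p_i\|_\infty \leq K_i^L \epsilon_i$, matching the desired form with $c_i = 1$. Conceptually, the proof is a direct invocation of Banach on each invariant region; the only real content is recognizing that disjointness of the $D_i$ together with uniqueness inside each region forces the looped network to possess multiple basins of attraction rather than a single global fixed point. I do not foresee a genuine obstacle beyond this packaging, though some care is needed to ensure the Jacobian bound is interpreted at the pre-activation $\langle w_j, x \rangle + b_j$ rather than at $\langle w_j, x \rangle$ alone.
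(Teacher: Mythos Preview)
Your proposal is correct and follows essentially the same route as the paper: verify invariance and the row-wise Jacobian bound on each $D_i$, then invoke the Banach fixed point theorem (the paper cites Lemma~\ref{lem:banach_fixed_point_rd}, you cite the equivalent Lemma~\ref{lem:fix_point_thm_vec:informal}). The only notable difference is in the quantitative step: you iterate the contraction estimate directly to obtain $\|x^{(L)}-p_i\|_\infty \le K_i^L \epsilon_i$ with $c_i=1$, whereas the paper instead sets $\epsilon_i := \sup_{y,z\in D_i}\|y-z\|_\infty$ and $c_i := 1/(1-K_i)$ and appeals to the first error bound of Lemma~\ref{lem:banach_fixed_point_rd}; your version is slightly sharper and arguably cleaner. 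Your closing remark about the bias term is well taken---the paper's hypothesis and proof both write $g'(\langle w_j,x\rangle)$ rather than $g'(\langle w_j,x\rangle+b_j)$, so either the bias is being absorbed notationally or the statement should be read at the full pre-activation, exactly as you flag.
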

\begin{proof}
Assume the conditions in the statement hold. Fix $i \in [m]$. Then we have $f(x; W) = \mathsf{NN}(x; W, b, 1) \in D_i$ for every $x \in D_i$. Next, for every $j \in [d]$, and for the $j$-th entry of $f(x;W)$, where we denoted by $f_{j}(x;W)$, we have 
\begin{align*}
    \| \frac{\d f_j(x; W , b)}{\d x}\|_1 = &~ \|  \frac{d g(\langle w_j, x \rangle)}{dx} \|_1 \\ = &~ \|g'(\langle w_j, x\rangle) \cdot w_j\|_1 \\
    = &~ |g'(\langle w_j, x\rangle)| \cdot \| w_j\|_1 \\
    \leq &~ K_i,
\end{align*}
where the first step follows from the definition of $f_j(x; W)$, the second step uses the chain rule, the third step is due to the property of norm, and the last step uses the condition in lemma.

Therefore, we can apply Lemma~\ref{lem:banach_fixed_point_rd}, for each $i \in [m]$, there exists a fixed point $p_i$ which can be converged to by the fixed point iteration, and thus it can be found by the looped neural network with initial point $x^{(0)}$ when the number of layers goes to infinity.  Next, for every $i \in [m]$, and for an initial point $x^{(0)} \in D_i$, there exists $\epsilon_i := \sup_{y,z \in D_i}\|y-z\|_\infty$, $c_i := \frac{1}{1-K_i}$ such that we have
\begin{align*}
    \| \mathsf{NN}(x^{(0)}; W, b, L) - p_i \|_\infty = &~
     \| x^{(L)} - p_i \|_\infty \\ \leq &~ \frac{K_i^t}{1-K_i} \|x^{(1)} - x^{(0)}\| \\ \leq &~ K_i^t \cdot c_i \epsilon_i,
\end{align*}
where the first step follows Definition~\ref{def:loop_nn}, and the second step uses the error bound in Lemma~\ref{lem:banach_fixed_point_rd}, and the second steps follows the definition of $\epsilon_i$ and $c_i$.
\end{proof}

Theorem~\ref{thm:general_theorem} gives us a way how to find different fixed points of vector-valued functions such as looped neural networks. Note that for different inputs, the fix point iteration behavior may be different, even when the model weights are fixed. We refer readers to Figure~\ref{fig:2d} and Figure~\ref{fig:polynomial_fixed_case} for more intuition.

\begin{figure*}[!ht]
    \centering
    \includegraphics[width=0.44\linewidth]{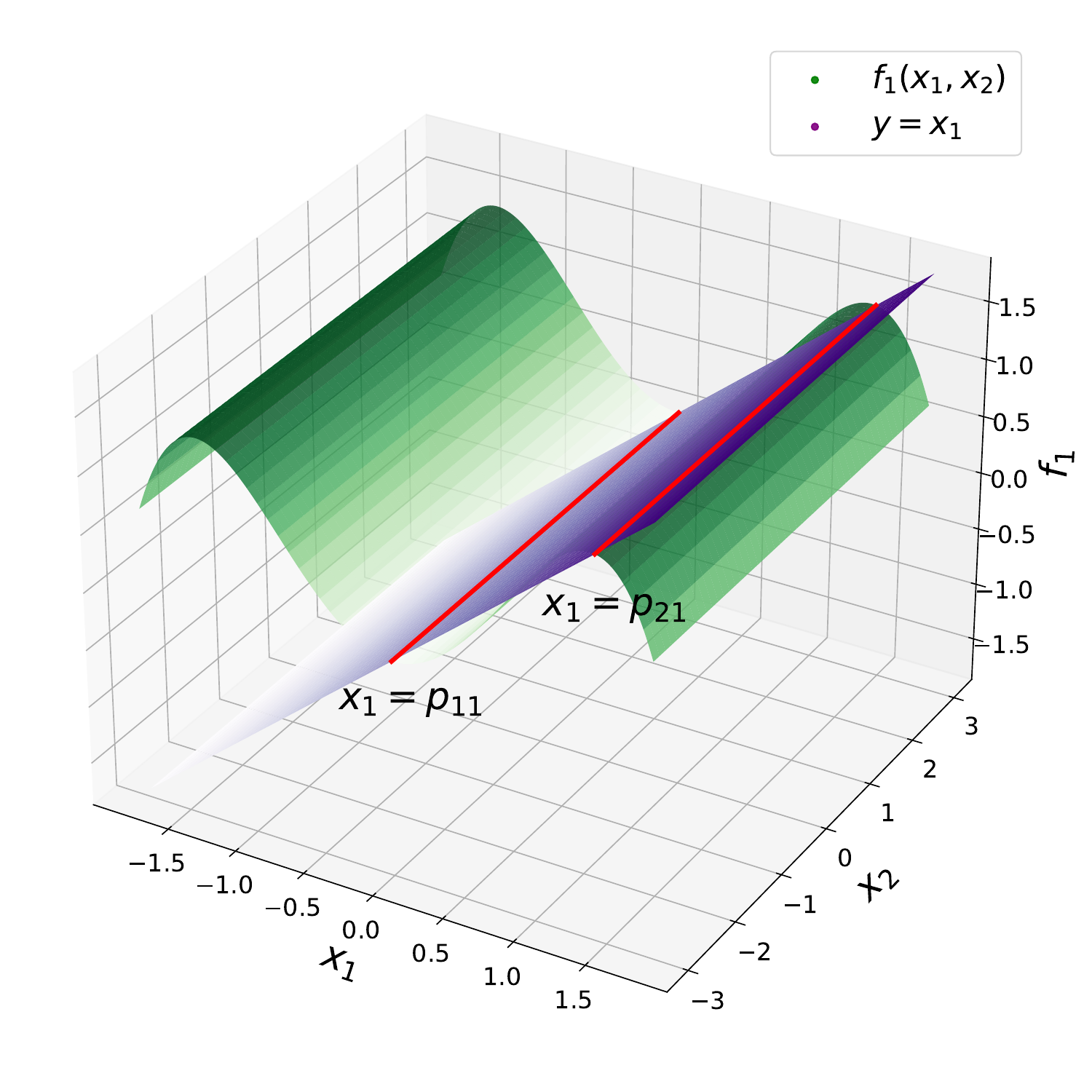}
    \includegraphics[width=0.54\linewidth]{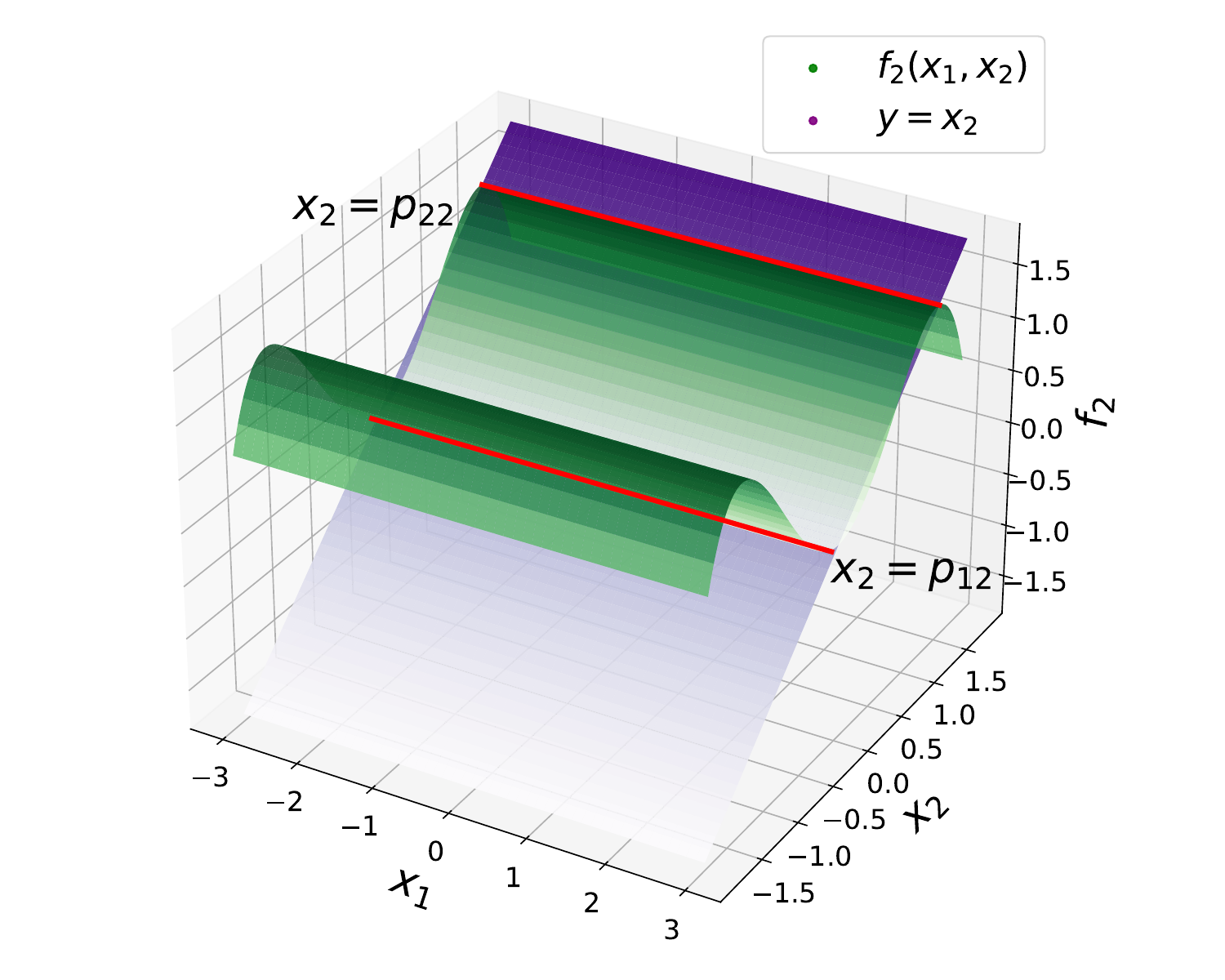}
    \caption{Example of a $2$-$d$ case of Theorem~\ref{thm:poly}. \textbf{Left}: The graph of $f_1(x_1,x_2)$ (in green) and $y = x_1$  (in purple), with the red line indicating the intersection of two curves ($1$-th dimension fixed point). \textbf{Right}: The graph of $f_2(x_1,x_2)$ (in green) and $y = x_2$ (in purple), with the red line indicating the intersection of two curves ($2$-th dimension fixed point).}
    \label{fig:polynomial_fixed_case}

\end{figure*}

\subsection{Perturbed Fixed Point Iteration}\label{sec:sub:noise}

Next, we consider a variant of fixed point method, where each iteration there is noise term $h(x)$.

\begin{theorem}[Robust Banach fixed point theorem, scalar case]\label{thm:robust_fix_point_thm_sca}
Let $D \subseteq \R$ be a nonempty closed set. Suppose that $f: D \to \R$ is differentiable and satisfies the following:
\begin{itemize}
    \item $f(x) \in D$ whenever $x \in D$.
    \item There exists constant $K \in [0,0.95]$ such that
    \begin{align*}
        | f'(x) | \leq K, ~ \forall x \in D.
    \end{align*}
    \item For any initial point $x^{(0)} \in D$, consider the perturbed fixed point iteration $x^{(t)} = f(x^{(t-1)}) + h(x^{(t-1)})$, for any $t \geq 1$, where the function $h$ satisfies $|h(x)| \leq 1/m$ for every $x \in D$ for some sufficiently large $m > 0$.
\end{itemize}
Then it holds that
\begin{itemize}
    \item The function $f$ has a unique fixed point $p \in D$.
    \item The following error bounds hold:
\begin{align*}
    |x^{(t)} - p| \leq &~ K |x^{(t-1)} - p| + \frac{1}{m} , \\
    |x^{(t)} - p| 
        \leq &~ K^t|x^{(0)} - p| + \frac{20}{m}.
\end{align*}
\end{itemize}
\end{theorem}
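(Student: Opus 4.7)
The plan is to reduce the perturbed version to the clean version by treating the noise $h(x^{(t-1)})$ as an additive error in a telescoping bound. First I would establish the existence and uniqueness of the fixed point $p$ for the unperturbed map $f$. Since $f$ is differentiable with $|f'(x)| \le K$ on $D$, the mean value theorem immediately gives $|f(x) - f(x')| \le K|x-x'|$ for all $x,x' \in D$, so $f$ is contractive with constant $K < 1$ in the sense of Definition~\ref{def:contractive_rd}. Combined with the self-map hypothesis $f(D) \subseteq D$, the scalar instance of Lemma~\ref{lem:banach_fixed_point_rd} (taking $d=1$ with the absolute-value norm) yields a unique $p \in D$ with $f(p) = p$.

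Next I would prove the one-step bound. Writing $x^{(t)} - p = f(x^{(t-1)}) - f(p) + h(x^{(t-1)})$, applying the triangle inequality, invoking the contractivity of $f$, and using $|h(x)| \le 1/m$ gives
\begin{align*}
|x^{(t)} - p| \le |f(x^{(t-1)}) - f(p)| + |h(x^{(t-1)})| \le K|x^{(t-1)} - p| + \tfrac{1}{m},
\end{align*}
which is the first stated inequality.

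For the second bound I would unroll this recursion: iterating the one-step inequality $t$ times yields
\begin{align*}
|x^{(t)} - p| \le K^t |x^{(0)} - p| + \tfrac{1}{m}\sum_{i=0}^{t-1} K^i \le K^t |x^{(0)} - p| + \tfrac{1}{m(1-K)}.
\end{align*}
The assumption $K \le 0.95$ then gives $\tfrac{1}{1-K} \le \tfrac{1}{0.05} = 20$, so the geometric series is at most $20/m$, completing the proof.

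There is really no serious obstacle here; the only step that requires any thought is recognizing that the hypothesis $K \in [0, 0.95]$ is precisely what produces the explicit constant $20$ in the final bound (and that a slightly sharper statement would replace $20$ by $1/(1-K)$). The rest is the standard Banach-style telescoping argument adapted to absorb a bounded additive perturbation.
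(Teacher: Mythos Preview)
Your proposal is correct and follows essentially the same approach as the paper: establish contractivity via the mean value theorem, invoke the Banach fixed point theorem for existence and uniqueness of $p$, derive the one-step bound by triangle inequality, then unroll into a geometric series and use $K \le 0.95$ to convert $1/(1-K)$ into the constant $20$. The only cosmetic difference is that the paper applies the mean value theorem directly inside the one-step bound (writing $f(x^{(t-1)}) - f(p) = f'(c)(x^{(t-1)} - p)$), whereas you first establish contractivity and then cite it; these are the same argument.
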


\begin{proof}
    Clearly $f$ has a fixed point $p \in D$ by Lemma~\ref{lem:fix_point_thm_sca:informal}. 
    
    We first show the first error bound. We can show that 
    \begin{align*}
        |x^{(t)} - p| = &~ |f(x^{(t-1)}) - p + h(x^{(t-1)})| \\
        = &~ |f(x^{(t-1)})  - f(p) + h(x^{(t-1)})| \\
        = &~ |f'(c) \cdot (x^{(t-1)} - p) + h(x^{(t-1)})| \\ 
        \leq &~ |f'(c) \cdot (x^{(t-1)} - p)| + |h(x^{(t-1)})| \\
        = &~|f'(c)| \cdot |x^{(t-1)} - p| + |h(x^{(t-1)})| \\
        \leq &~ K |x^{(t-1)} - p| + \frac{1}{m}
    \end{align*}
    where the first step uses the perturbed fixed point iteration $x^{(t)} = f(x^{(t-1)}) +  h(x^{(t-1)})$, the second step is due to the fact that $p$ is a fixed point of $f$, the third step follows from the mean value theorem where $c$ is a point between $x^{(t-1)}$ and $p$, the fourth step uses the triangle inequality, the fifth step follows from basic algebra, and the last step follows from $|f'(x)| \leq K$ and $|h(x)| \leq 1/m$.

    Next, we show the second error bound. We can show that
    \begin{align*}
        |x^{(t)} - p| \leq &~ K |x^{(t-1)} - p| + \frac{1}{m} \\
        \leq &~ K(K |x^{(t-2)} - p| + \frac{1}{m}) +\frac{1}{m} \\
        \leq &~ \cdots \\
        \leq &~ K^t|x^{(0)} - p| + \sum_{i=1}^{t-1} \frac{K^i}{m} \\
        \leq &~ K^t|x^{(0)} - p| + \frac{1-K^{t-1}}{(1-K)m} \\
        \leq &~ K^t|x^{(0)} - p| + \frac{20}{m},
    \end{align*}
    where the first four steps follow from recursively using the third error bound, the fifth step is due to the sum of geometric series, and the last step follows from $K \in [0,0.95]$.
\end{proof}
Theorem~\ref{thm:robust_fix_point_thm_sca} shows that each fix iteration process is robust and may not be hurt by the noise much. It corresponds to the setting that deep neural networks with residual connections \citep{hzrs16}, where after each layer the hidden states are only slightly perturbed. Our experiments in Section~\ref{sec:exp} support our theoretical analysis of robustness.

\section{CASE STUDY}\label{sec:case study}

In this section, we provide case studies of the Looped Neural Networks with different activation functions. We show that the Looped Neural Networks may have $2^d$ number of different fixed points.

First, we consider the polynomial activation function. We have the following robust fixed points statement. 

\begin{theorem}[A specific polynomial activation function with small perturbation]\label{thm:poly}
    Consider the $L$-layer looped neural networks $\mathsf{NN}(x^{(0)}; W, b, L)$ defined in Definition~\ref{def:loop_nn}. If the following conditions hold:
    \begin{itemize}
        \item Let $C :=\sqrt{\frac{15 + \sqrt{65}}{8}}$ be a constant.
        \item Let $g(x) :=  -\frac{2}{5} x^4 + \frac{8}{5}C x^3 + (\frac{3}{2} - \frac{12}{5}C^2) x^2 + (\frac{8}{5}C^3-3C) x + 1$ denote the polynomial activation function used in this looped neural networks.
    \end{itemize}
    Then there exists a set of noisy parameters $W$ and $b$ such that $x^{(t)} = \mathsf{NN}(x^{(t-1)}; W, b, L) + h(x^{(t-1)}) \in \R^{d}$, for any $t \in [L]$, where the function $h$ satisfies $|h(x)| \leq 1/m$ for every $x \in \R^{d}$ for some sufficiently large $m > d$.
    For this looped neural network, the following statements hold:
    \begin{itemize}
        \item  The single layer of the looped neural network, $f(x;W,b)$, has at least $2^d$ robust fixed-points $p_1, \cdots ,p_{2^d}$ satisfying: For every $i \in [2^d]$, there exists a vector $\epsilon_i \in \R^{d}$, for any initial point $x^{(0)}$ with $\|x^{(0)} - p_i\|_\infty \leq \|\epsilon_i\|_\infty$, we have
        \begin{align*}
        \lim_{L\to\infty} \mathsf{NN}(x^{(0)}; W, b, L) = p_i.
        \end{align*}
        \item  For every $i \in [2^d]$, there exists a constant $c_i > 0$ and a constant $K_i \in [0,0.9)$ such that for any $L \geq 2$, we have
        \begin{align*}
        \| \mathsf{NN}(x^{(0)}; W,b, L) - p_i \|_\infty \leq K_i^t \cdot 
        \| \epsilon_i\|_\infty + \frac{20}{m}.
        \end{align*}
    \end{itemize}
\end{theorem}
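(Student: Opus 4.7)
The plan is to reduce the $d$-dimensional claim to a one-dimensional fixed-point analysis of the scalar polynomial $g$, and then to tensorise via the general theorem (Theorem~\ref{thm:general_theorem}) together with the robust scalar bound (Theorem~\ref{thm:robust_fix_point_thm_sca}).

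\textbf{Simplifying $g$.} The first step is to exploit the defining equation of $C$. Squaring $C^2 = (15+\sqrt{65})/8$ gives $(8C^2-15)^2 = 65$, which rearranges to the algebraic identity $\tfrac{2}{5}C^4 - \tfrac{3}{2}C^2 + 1 = 0$. Expanding $g$ in powers of $u := x-C$ and matching coefficients, the $u^3$ and $u^1$ terms cancel by construction, and the constant term collapses by exactly the identity above, so $g$ reduces to the clean form
\begin{align*}
    g(x) \;=\; -\tfrac{2}{5}(x-C)^4 + \tfrac{3}{2}(x-C)^2 .
\end{align*}
Thus $g$ is even about $x=C$, with a local minimum $g(C)=0$ and two symmetric local maxima at $x = C \pm \sqrt{15/8}$ where $g' = 0$ and $g = 45/32$. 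In particular, $|g'|$ is small on small neighbourhoods of each critical point; this is the feature I exploit.

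\textbf{Decoupling into coordinates and building invariant boxes.} I then choose $W$ diagonal (so that the dynamics decouples across coordinates) together with a bias $b$ so that each coordinate executes the same scalar iteration $y \mapsto g(y)+\text{(shift)}$ whose two fixed points $q_1 < q_2$ lie in two disjoint intervals $I_1, I_2 \subset \R$, each centred at one of the low-slope regions around $C \pm \sqrt{15/8}$. On each $I_j$ I verify the two hypotheses of Theorem~\ref{thm:general_theorem}: invariance $g(I_j)+\text{shift} \subseteq I_j$, and the slope bound $|g'(x)|\cdot\|w_j\|_1 \leq K < 0.9$. Taking the $2^d$ product boxes $D_{\vec s} := I_{s_1}\times\cdots\times I_{s_d}$ for $\vec s \in \{1,2\}^d$ yields $m = 2^d$ pairwise disjoint invariant regions on which the hypotheses of Theorem~\ref{thm:general_theorem} hold coordinatewise, and applying it produces $2^d$ distinct fixed points $p_1,\ldots,p_{2^d}$ of $f(\cdot;W,b)$ together with the claimed deterministic contraction rate $K_i^L$.

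\textbf{Incorporating the noise.} For the perturbed iterate $x^{(t)} = f(x^{(t-1)}) + h(x^{(t-1)})$ with $\|h(x)\|_\infty \leq 1/m$, the decoupled structure reduces the vector iteration to $d$ independent scalar perturbed iterations, each matching the hypotheses of Theorem~\ref{thm:robust_fix_point_thm_sca} with contractivity $K_i \in [0,0.9]$ (the assumption $m > d$ ensures the per-coordinate perturbation is $\leq 1/m$, in line with the scalar hypothesis). Applying the scalar robust bound coordinatewise and passing to the $\ell_\infty$-norm over coordinates yields the desired estimate $\|\mathsf{NN}(x^{(0)};W,b,L) - p_i\|_\infty \leq K_i^L \|\epsilon_i\|_\infty + 20/m$.

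\textbf{Main obstacle.} The delicate step is the joint choice of $W,b$ and explicit intervals $I_1,I_2$ for which both invariance and strict contractivity ($|g'|<0.9$) can be verified. The naive choice $W=I,\,b=0$ does not work, since the roots of $g(x)=x$ land in high-slope regions with $|g'|>1$; the construction must push the fixed points toward $C\pm\sqrt{15/8}$, where $g'=0$. Pinning down admissible intervals (together with a shift that is small enough to preserve the algebraic structure used in Step~1 but large enough to move the fixed points into the contractive regime) is the bookkeeping-heavy part of the argument, and is where the specific value $K \leq 0.9$ in the hypothesis of Theorem~\ref{thm:robust_fix_point_thm_sca} gets used.
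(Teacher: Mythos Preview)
Your overall architecture—simplify $g$, decouple coordinatewise, tensorise two scalar basins into $2^d$ product boxes, and finish with the robust scalar bound—is exactly the paper's route, and your reduction $g(x) = -\tfrac{2}{5}(x-C)^4 + \tfrac{3}{2}(x-C)^2$ is correct and is the right first move. But two concrete things are off.

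\textbf{The bias acts inside $g$, not outside.} By Definition~\ref{def:loop_nn} a diagonal $W$ with bias $b$ gives the per-coordinate map $y \mapsto g(w_{jj}\,y + b_j)$, not $y \mapsto g(y) + \text{(shift)}$ as you wrote. This is not cosmetic: with $W \approx I$ and $b_j = C$ one gets
\[
y \;\longmapsto\; g(y+C) \;=\; -\tfrac{2}{5}\,y^4 + \tfrac{3}{2}\,y^2
\]
\emph{exactly}, and this even quartic already has two attracting fixed points $0$ and $\approx 1.403$, each sitting in an interval on which the derivative is bounded by $0.92$ (this is Lemma~\ref{lem:1d_fixed_point_polynomial}). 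So the ``main obstacle'' you describe—delicately tuning a shift to push fixed points toward $C\pm\sqrt{15/8}$—is not an obstacle at all; the single choice $b = C\,{\bf 1}_d$ does the whole job, and your intervals $I_1,I_2$ are just $(-0.3,0.3)$ and $(1.302,1.502)$ in the shifted variable.

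\textbf{The noise is generated by $W$, not added externally.} The paper does not keep $W$ exactly diagonal and append an independent $h$. It takes $W = I + \tfrac{1}{m^2}({\bf 1}_d{\bf 1}_d^\top - I)$, so that $\langle w_j,x\rangle + C = x_j + C + \tfrac{1}{m^2}\sum_{k\neq j}x_k$; expanding $g$ then gives $g(\langle w_j,x\rangle + C) = -\tfrac{2}{5}x_j^4 + \tfrac{3}{2}x_j^2 + \tfrac{1}{m^2}h_j(x)$ with $h_j$ a polynomial in all coordinates. This cross-coupling \emph{is} the perturbation to which Theorem~\ref{thm:robust_fix_point_thm_sca} is applied, and it is why the parameters are called ``noisy'' in the statement. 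The hypothesis $m>d$ is what makes the sum of $d-1$ off-diagonal contributions, each of size $O(1/m^2)$, bounded by $1/m$; it has nothing to do with bounding an external $\|h\|_\infty$ as you suggested.
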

\begin{proof}
        Let $f(x;W,b)$ be a single layer version of $\mathsf{NN}(x; W, b, 1)$. Let $m > d$ be a sufficiently large constant. Let $W = \frac{1}{m^2}{\bf 1}_d \cdot {\bf 1}_d^\top+(-\frac{1}{m^2}+1)\diag({\bf 1}_{d}) \in \R^{d\times d}$ denote the weight matrix. Let $b = [C,\cdots,C]^\top \in \R^d$ denote the bias vector. Then it's clear that for each $j \in [d]$, we have
        \begin{align*}
            g(\langle w_j,x \rangle+b_j) =&~ -\frac{2}{5}x_j^4+\frac{3}{2}x_j^2+\frac{1}{m^2}h_j(x)\\
            = &~ f_j(x;W,b) +\frac{1}{m^2}h_j(x)
        \end{align*}
        where $h_j(x)$ is a polynomial in $x_1,\cdots,x_d$. For every $j \in [d]$, there exists a $D_1 = (1.302,1.502)$ and $D_2 = (-0.3,0.3)$, it satisfies that $f_j(x;W,b) \in D_1$ when $x_j \in D_1$ and $f_j(x;W,b) \in D_2$ when $x_j \in D_2$. By the proof in Lemma~\ref{lem:1d_fixed_point_polynomial}, there exists a $K_j=0.92$ such that for any $x_j \in D_1 \cup D_2$ we have
        \begin{align*}
            |f'_j(x;W,b)| \le K_j.
        \end{align*}
        
        It's obvious that when $m$ is sufficiently large and $\|\epsilon_i\|_\infty$ is small enough, we will have $|\frac{1}{m^2}h_j(x)|\leq \frac{1}{m}$, where we can see $\frac{1}{m^2}h_j(x)$ as $h(x)$ in statement of Theorem~\ref{thm:robust_fix_point_thm_sca}. 
        Then by combing the result of Theorem~\ref{thm:robust_fix_point_thm_sca}, we have that for each dimension $j \in [d]$, we have $2$ robust fixed points. So we can get $f(x;W,b)$ has $2^d$ Robust Fixed Points trivially.
        And for any $i \in [2^d], j \in [d]$, we have
        \begin{align*}
            &~|\mathsf{NN}_j(x^{(t)}; W,b, L)-p_{i,j}| \\\leq&~K_{i,j}^t|\mathsf{NN}_j(x^{(0)}; W,b, L)-p_{i,j}|+\frac{20}{m}
            \\ \leq &~K^{t}_{i,j}\epsilon_{i,j}+\frac{20}{m}
        \end{align*}
        where the first step comes from the result of Theorem~\ref{thm:robust_fix_point_thm_sca}, the second step comes from the range of values for the initial point. 
        Then use the definition of $\|\cdot\|_\infty$, we have
        \begin{align*}
            \| \mathsf{NN}(x^{(t)}; W,b, L) - p_i \|_\infty \leq  K_i^t \cdot \| \epsilon_i \|_\infty + \frac{20}{m}.
        \end{align*}
        Then we complete the proof.
\end{proof}

In Theorem~\ref{thm:poly}, we can see that our Looped Neural Network has $2^d$ different robust fixed points solutions. Recall that the previous analysis tools can only handle single fixed point analysis. 

We can show the existence of $2^d$ different robust fixed points solutions when the Looped Neural Network uses exponential activation as well. 

\begin{theorem}[A specific exponential activation function with small perturbation]\label{thm:exponential}
   Consider the $L$-layer looped neural networks $\mathsf{NN}(x^{(0)}; W, b, L)$ defined in Definition~\ref{def:loop_nn}. If the following conditions hold:
    \begin{itemize}
        \item Let $C :=-2.15$ be a constant.
        \item Let $ g(x) := \exp(x^3+(-2-3C)x^2 +(3C^2+4C)x+\ln{2})-1$ denote the exponential activation function used in this looped neural networks.
    \end{itemize}
    Then there exists a set of noisy parameters $W$ and $b$ such that $x^{(t)} = \mathsf{NN}(x^{(t-1)}; W, b, L) + h(x^{(t-1)}) \in \R^{d}$, for any $t \in [L]$, where the function $h$ satisfies $|h(x)| \leq 1/m$ for every $x \in \R^{d}$ for some sufficiently large $m > d$.
    For this looped neural network, the following statements hold:
    \begin{itemize}
        \item  The single layer of the looped neural network, $f(x;W,b)$, has at least $2^d$ robust fixed-points $p_1, \cdots ,p_{2^d}$ satisfying: For every $i \in [2^d]$, there exists a vector $\epsilon_i \in \R^{d}$, for any initial point $x^{(0)}$ with $\|x^{(0)} - p_i\|_\infty \leq \|\epsilon_i\|_\infty$, we have
        \begin{align*}
        \lim_{L\to\infty} \mathsf{NN}(x^{(0)}; W, b, L) = p_i.
        \end{align*}
        \item  For every $i \in [2^d]$, there exists a constant $c_i > 0$ and a constant $K_i \in [0,0.9)$ such that for any $L \geq 2$, we have
        \begin{align*}
        \| \mathsf{NN}(x^{(0)}; W,b, L) - p_i \|_\infty \leq K_i^t \cdot 
        \| \epsilon_i\|_\infty + \frac{20}{m}.
        \end{align*}
    \end{itemize}
\end{theorem}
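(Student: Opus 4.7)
The plan is to mirror the structure of the proof of Theorem~\ref{thm:poly}. I would construct a weight matrix $W$ that is essentially the identity up to a uniform $O(1/m^2)$ off-diagonal perturbation, and take $b = C \cdot {\bf 1}_d$ with $C = -2.15$. Concretely, set $W = \frac{1}{m^2}{\bf 1}_d {\bf 1}_d^\top + (1 - \frac{1}{m^2})\diag({\bf 1}_d)$, so that $\langle w_j, x\rangle + b_j = x_j + C + \frac{1}{m^2}\sum_{k \neq j} x_k$. The key point is that the off-diagonal coupling between coordinates is $O(1/m^2)$, so up to a bounded perturbation the dynamics decouple into $d$ independent scalar iterations.

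Next, I would substitute $y = x_j + C$ into the cubic inside the exponential and expand. The coefficients of the cubic are deliberately chosen so that, after shifting by $C$, the quadratic and linear terms in $x_j$ simplify dramatically: the $x_j^2$ coefficient reduces to $-2$, the linear term cancels, and the constant term becomes $C^3 + 2C^2 + \ln 2$. The role of the specific value $C = -2.15$ is precisely to make $C^3 + 2C^2 + \ln 2 \approx 0$, so that the argument of $\exp$ collapses to $x_j^3 - 2 x_j^2$ plus an $O(1/m^2)$ coupling term. Consequently, up to a multiplicative $\exp$-factor that is bounded whenever $x_j$ stays in a bounded region, we can write
\begin{align*}
    g(\langle w_j, x\rangle + b_j) = \exp(x_j^3 - 2x_j^2) - 1 + \frac{1}{m^2} \tilde h_j(x),
\end{align*}
and the residual $\frac{1}{m^2}\tilde h_j(x)$ fits the hypothesis of Theorem~\ref{thm:robust_fix_point_thm_sca} (that is, bounded by $1/m$) provided $m$ is taken sufficiently large relative to $d$ and to the range of $x$.

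Then I would invoke a one-dimensional lemma (the exponential analogue of Lemma~\ref{lem:1d_fixed_point_polynomial}) stating that the scalar map $\tilde f(x_j) = \exp(x_j^3 - 2x_j^2) - 1$ possesses two attracting fixed points lying in disjoint closed intervals $D_1^{\exp}, D_2^{\exp} \subseteq \R$, each of which is forward-invariant under $\tilde f$, and on each of which $|\tilde f'(x_j)| = |(3x_j^2 - 4x_j) \exp(x_j^3 - 2x_j^2)| \leq K < 0.9$. One fixed point lies near $0$ (where $\tilde f(0) = 0$ exactly and $\tilde f'(0) = 0$, so contractivity is immediate on a small interval), and the other lies in a bounded interval around $x \approx 2.25$ which can be verified by an elementary monotonicity argument on $x^3 - 2x^2 - \ln(x+1)$. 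Applying Theorem~\ref{thm:robust_fix_point_thm_sca} coordinatewise then yields, for each coordinate $j$, two robust fixed points with the stated $K^L$ decay plus $20/m$ noise floor.

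Finally, since each coordinate can independently converge to either of its two attractors, the Cartesian product gives $2^d$ distinct robust fixed points $p_1, \ldots, p_{2^d}$ of the full map $f(\cdot; W, b)$; the corresponding basin vector $\epsilon_i$ is the product of the scalar basin radii, and the infinity-norm error bound in the conclusion follows by taking the max over coordinates of the scalar bounds from Theorem~\ref{thm:robust_fix_point_thm_sca}. The main obstacle I expect is the scalar step: verifying rigorously that $\tilde f$ has a second fixed point in a bounded interval and that the derivative stays below $0.9$ on both invariant regions, since unlike in the polynomial case the exponential factor can amplify $|\tilde f'|$ rapidly outside a narrow window. This requires carefully choosing the intervals $D_1^{\exp}, D_2^{\exp}$, verifying forward invariance $\tilde f(D_i^{\exp}) \subseteq D_i^{\exp}$, and controlling the $O(1/m^2)$ coupling term so that it remains within the $1/m$ tolerance of Theorem~\ref{thm:robust_fix_point_thm_sca} uniformly on these intervals.
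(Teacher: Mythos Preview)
Your overall strategy is the same as the paper's: the same choice of $W = \frac{1}{m^2}{\bf 1}_d{\bf 1}_d^\top + (1 - \frac{1}{m^2})\diag({\bf 1}_d)$ and $b = C{\bf 1}_d$, the same reduction of $g(\langle w_j,x\rangle + b_j)$ to the scalar map $\tilde f(x_j) = \exp(x_j^3 - 2x_j^2) - 1$ plus an $O(1/m^2)$ coupling, then a coordinatewise appeal to Theorem~\ref{thm:robust_fix_point_thm_sca} followed by the Cartesian product to get $2^d$ robust fixed points. Your algebra for the shift by $C$ (the constant term $C^3 + 2C^2 + \ln 2 \approx 0$ at $C = -2.15$) is correct.

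The concrete gap is your identification of the second attractive fixed point. You place it near $x \approx 2.25$; there is indeed a fixed point of $\tilde f$ between $2.2$ and $2.25$, but it is strongly \emph{repelling}: at $x = 2.2$ one has $3x^2 - 4x \approx 5.7$ and $\exp(x^3 - 2x^2) \approx 2.6$, so $|\tilde f'(x)| \approx 15$, far above $1$. No choice of interval around this point will satisfy the contraction hypothesis of Theorem~\ref{thm:robust_fix_point_thm_sca}, so the plan as written fails at exactly the step you flagged as the main obstacle. The paper instead uses the second fixed point $p_2 \approx -0.9104$ (since $(-0.91)^3 - 2(-0.91)^2 \approx \ln(0.09)$), with intervals $D_1 = (-0.1,0.1)$ and $D_2 = (-1.01,-0.81)$; on $D_2$ the exponential factor $\exp(x_j^3 - 2x_j^2) \approx 0.09$ damps the polynomial factor $3x_j^2 - 4x_j \approx 6$, giving $|\tilde f'| \leq 0.85$. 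With this correction the rest of your argument goes through exactly as in the paper (Lemma~\ref{lem:exponential_example} plus the proof of Theorem~\ref{thm:poly}).
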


\begin{proof}
    Let $f(x;W,b)$ be a single layer version of $\mathsf{NN}(x; W, b, 1)$. Let $m > d$ be a sufficiently large constant. Let $W = \frac{1}{m^2}{\bf 1}_d \cdot {\bf 1}_d^\top+(-\frac{1}{m^2}+1)\diag({\bf 1}_{d}) \in \R^{d\times d}$ denote the weight matrix. Let $b = [C,\cdots,C]^\top \in \R^d$ denote the bias vector. Then it's clear that for each $j \in [d]$, we have
        \begin{align*}
            g(\langle w_j,x \rangle+b_j) 
            = & ~ \exp(x_{j}^3-2x_{j})-1 + \frac{1}{m^2}h_j(x)
        \end{align*}
    where the first step follows from $\exp(z) = 1+O(z)$ when $|z| \le 1$ and $|x_j| \le 1$.  $h_j(x)$ is an exponential function in $x_1,\cdots,x_d$. For every $j \in [d]$, there exists a $D_1 = (-0.1,0.1)$ and $D_2 = (-1.01,-0.81)$, it satisfies that $f_j(x;W,b) \in D_1$ when $x_j \in D_1$ and $f_j(x;W,b) \in D_2$ when $x_j \in D_2$. By the proof in Lemma~\ref{lem:exponential_example}, there exists a $K_j=0.85$ such that for any $x_j \in D_1 \cup D_2$ we have
    \begin{align*}
        |f'_j(x;W,b)| \le K_j.
    \end{align*}
    Then we finish the proof by following the same proof statement in Theorem~\ref{thm:poly}.
    
\end{proof}

\begin{figure}[!ht]\label{fig:iteration_trajectory}
    \centering
    \includegraphics[width=0.6\linewidth]{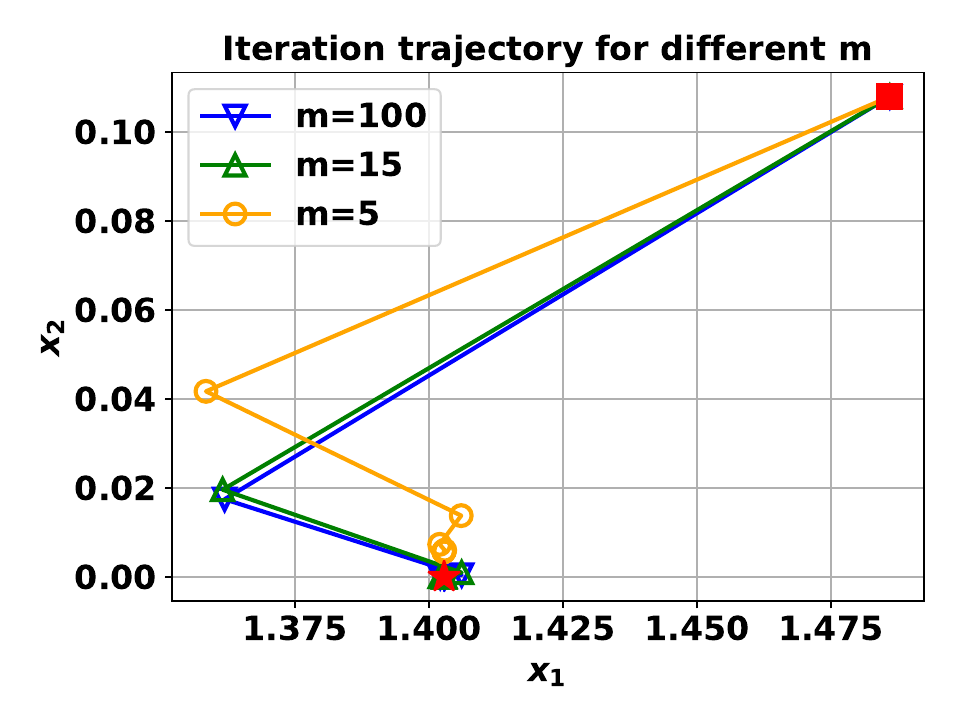}
    \caption{Different iteration trajectory when taking different $m$ in Theorem~\ref{thm:robust_fix_point_thm_sca}. We consider the 2-$d$ case, where the two axes are $x_1$ and $x_2$. The red square ({\color{red} \ding{110}}) in this figure is the initial point $x^{(0)} \in \R^{2}$. The red star ({\color{red} \ding{72}}) in this figure is the robust fixed point without noise. 
    The $m$ represents the inverse of the noise level, i.e., noise $=1/m$. From the figure, we can see that even though the noise changes the fixed point iteration process, different trajectories will eventually converge to the robust fixed point.
    }
    \label{fig:trajectory}
\end{figure}

\section{EXPERIMENTS}\label{sec:exp}

\subsection{Setup} 
In our experiments, we used Python 3.8 for simulation and employed  Matplotlib 3.4.3 library to visualize the experimental results. We conduct three simulations in our work:
\begin{itemize}
    \item We are going to verify the polynomial function in Lemma~\ref{lem:1d_fixed_point_polynomial} and the exponential function in Lemma~\ref{lem:exponential_example} actually have $2$ fixed point.
    \item We are going to verify that the neural network in Theorem~\ref{thm:poly} has $2$ fixed point in each dimension.
    \item Recall that $m$ represents the inverse of the noise level in Theorem~\ref{thm:poly}. We are going to verify that even the noise changes in the neural network of Theorem~\ref{thm:poly}, different fixed point iteration trajectories will eventually converge to the Robust Fixed Point.
\end{itemize}

\subsection{Results} 
In Figure~\ref{fig:2d}, we simulate the Banach fixed point iteration on a polynomial function $f(x) = -\frac{2}{5}x^4+\frac{3}{2}x^2$ and an exponential function $f(x) = \exp(x^3-3x^2)-1$. It shows that there are two fixed points ($x_1 = 0$ and $x_2 \approx 1.403$) of the polynomial function and the points in the neighborhood of a fixed point will iterate to the corresponding fixed point through fixed-point iteration. It also shows that there are two fixed points ($x_1 = 0$ and $x_2 \approx -0.910$) of the exponential function, and the points in the neighborhood of a fixed point will iterate to the corresponding fixed point through fixed-point iteration.

In Figure~\ref{fig:polynomial_fixed_case}, we simulate a $2$-$d$ neural network in Theorem~\ref{thm:poly}. It shows that in the first dimension of the neural network, we have two robust fixed points ($x_1 = 0$ and $x_1 \approx 1.403$), and in the second dimension of the neural network, we have two robust fixed points ($x_2 = 0$ and $x_2 \approx 1.403$).

In Figure~\ref{fig:trajectory}, we simulate the effect of different values of $m$ on the fixed-point iteration. It shows that even though the noise changes the fixed point iteration process, different trajectories, $m=5$, $m=15$, and $m=100$, when taking the same initial point ({\color{red} \ding{110}}), will eventually converge to the robust fixed point ({\color{red} \ding{72}}).

\section{CONCLUSION}\label{sec:conclusion}
This study provides an analytical framework for understanding fixed point iterations in deep neural networks. We established theorems for multiple fixed points and their robustness under noise. Case studies with polynomial and exponential activation functions show the looped neural networks may have an exponential number of robust fixed points, demonstrating our approach's effectiveness. Our findings offer new insights into deep neural networks, contributing to the analysis toolkit and potentially enhancing deep neural network understanding and efficient algorithm design. 

\ifdefined\isarxiv
\section*{Acknowledgments}
Research is partially supported by the National Science Foundation (NSF) Grants 2023239-DMS, CCF-2046710, and Air Force Grant FA9550-18-1-0166.
\bibliographystyle{alpha}
\bibliography{ref}
\else
\bibliography{ref}
\bibliographystyle{plainnat}
\input{checklist}
\fi

\newpage
\onecolumn
\appendix

\ifdefined\isarxiv

\begin{center}
    \textbf{\LARGE Appendix }
\end{center}

\else

\aistatstitle{
Advancing the Understanding of Fixed Point Iterations in Deep Neural Networks: A Detailed Analytical Study: \\
Supplementary Materials}

{\hypersetup{linkcolor=black}
\tableofcontents
\bigbreak
}
\fi

\paragraph{Roadmap.} In Section~\ref{sec:app_fixed_point}, we introduce the fixed point method and the well-known Banach fixed-point theorem. In Section~\ref{app:poly}, we provide the analysis of fixed points under the polynomial activation function. In Section~\ref{app:exp}, we provide the analysis of fixed points under the exponential activation function.

\section{TOOLS OF FIXED POINT METHODS}\label{sec:app_fixed_point}
In this section, we present the tools of fixed point methods used in our work. In Section~\ref{Banach Fixed Point Theorem}, we introduce the general case of Banach fixed point theorem.
In Section~\ref{sec:scalar_case_banach}, we present the scalar case of Banach fixed point theorem. In Section~\ref{sec:vector_case_banach}, we present the vector case of Banach fixed point theorem. In Section~\ref{sec:matrix_case_banach}, we present the matrix case of Banach fixed point theorem. In Section~\ref{sec:examples_using_banach}, we present some calculation examples using Banach fixed point theorem.
\subsection{Banach Fixed Point Theorem}\label{Banach Fixed Point Theorem}
We firstly present the definition of fixed point.
\begin{definition}[Fixed point, \cite{ah09}]
 Let $V$ be a Banach space with the norm $\|\cdot\|_V$, and let $D$ be a subset of $V$. We say a function $f: D \to V$ has a fixed point $p \in D$ if $f(p) = p$.
\end{definition}
And we introduce the concept of contractive mapping which will be used later.
\begin{definition}[Contractive mapping, Definition 5.1.2 of~\cite{ah09}]\label{def:contractive}
Let $V$ be a Banach space with the norm $\|\cdot\|_V$, and let $D$ be a subset of $V$. We say that a function $f: D \to V$ is contractive with contractivity constant $K \in [0,1)$ if
\begin{align*}
\|f(x) - f(x')\|_V \leq K \|x - x'\|_V , ~ \forall x, x' \in V.
\end{align*}
\end{definition}

Now, we can present the Banach fixed point theorem, which is crucial in guaranteeing the existence and uniqueness of fixed points in metric spaces under contractive mappings, forming the foundation of many applications in analysis and applied mathematics.
\begin{lemma}[Banach fixed point theorem, Theorem 5.1.3 of~\cite{ah09}]\label{lem:banach_fixed_point}
Let $V$ be a Banach space $V$ with the norm $\|\cdot\|_V$. Let $D$ be a nonempty closed set of $V$. Suppose that $f : D \to V$ is a mapping that satisfies the following
\begin{itemize}
    \item $f(x) \in D$ whenever $x \in D$.
    \item $f$ is contractive with contractivity constant $K \in [0, 1)$.
\end{itemize}
Then it holds that
\begin{itemize}
    \item The function $f$ has a unique fixed point $p \in D$.
    \item For any initial point $x^{(0)} \in D$, the fixed-point iteration $x^{(t)} = f(x^{(t-1)})$, $t \geq 1$, converges to the fixed point $p$ as $t \to \infty$.
    \item The following error bounds hold:
    \begin{align*}
        \|x^{(t)} - p \|_V \leq&~ \frac{K^t}{1-K} \|x^{(1)} - x^{(0)}\|_V,\\
    \|x^{(t)} - p \|_V \leq&~ \frac{K}{1-K} \|x^{(t)} - x^{(t-1)}\|_V,\\
    \|x^{(t)} - p \|_V \leq&~ K \|x^{(t-1)} - p \|_V.
    \end{align*}
\end{itemize}
\end{lemma}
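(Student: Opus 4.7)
The plan is to follow the classical proof of the contraction mapping principle, broken into four phases: Cauchy estimate on iterates, extraction of a limit via completeness and closedness, verification that the limit is the unique fixed point, and finally deriving the three quantitative error bounds by passing to the limit in the Cauchy estimate.

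First I would establish the basic one-step estimate. Using contractivity inductively,
\begin{align*}
\|x^{(t+1)} - x^{(t)}\|_V = \|f(x^{(t)}) - f(x^{(t-1)})\|_V \leq K \|x^{(t)} - x^{(t-1)}\|_V \leq \cdots \leq K^t \|x^{(1)} - x^{(0)}\|_V.
\end{align*}
Next, by the triangle inequality and the geometric series, for any $s > t \geq 0$ I would bound
\begin{align*}
\|x^{(s)} - x^{(t)}\|_V \leq \sum_{j=t}^{s-1} \|x^{(j+1)} - x^{(j)}\|_V \leq \Big(\sum_{j=t}^{s-1} K^j\Big) \|x^{(1)} - x^{(0)}\|_V \leq \frac{K^t}{1-K} \|x^{(1)} - x^{(0)}\|_V.
\end{align*}
Since $K \in [0,1)$, the right-hand side tends to $0$ as $t \to \infty$, so $\{x^{(t)}\}$ is Cauchy in $V$.

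In the second phase I would invoke completeness of the Banach space $V$ to produce a limit $p \in V$, and use the hypothesis that $D$ is closed together with $x^{(t)} \in D$ for all $t$ to conclude $p \in D$. To see $p$ is a fixed point, I would note that contractivity makes $f$ Lipschitz and hence continuous; passing to the limit in $x^{(t)} = f(x^{(t-1)})$ then gives $p = f(p)$. For uniqueness, if $p, q \in D$ both satisfy $f(p)=p$ and $f(q)=q$, then $\|p-q\|_V = \|f(p)-f(q)\|_V \leq K\|p-q\|_V$, which forces $\|p-q\|_V = 0$ since $K < 1$.

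Finally I would derive the three error bounds. Letting $s \to \infty$ in the Cauchy estimate above yields the first bound $\|x^{(t)} - p\|_V \leq \frac{K^t}{1-K}\|x^{(1)}-x^{(0)}\|_V$. The third bound follows directly from contractivity: $\|x^{(t)} - p\|_V = \|f(x^{(t-1)}) - f(p)\|_V \leq K\|x^{(t-1)} - p\|_V$. For the middle bound, I would apply the first bound to the shifted iteration starting at $x^{(t-1)}$ (whose first two terms are $x^{(t-1)}$ and $x^{(t)}$), which immediately gives $\|x^{(t-1)} - p\|_V \leq \frac{1}{1-K}\|x^{(t)} - x^{(t-1)}\|_V$, and then one more application of contractivity yields $\|x^{(t)} - p\|_V \leq \frac{K}{1-K}\|x^{(t)} - x^{(t-1)}\|_V$.

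There is no real obstacle here, since this is the textbook argument; the only point that requires care is making sure the limit actually lies in $D$, which is handled cleanly by the closedness hypothesis, and making sure continuity of $f$ is invoked (it is an immediate consequence of Definition~\ref{def:contractive} with Lipschitz constant $K$). The three error bounds are then essentially rearrangements of the Cauchy estimate and the one-step contractive estimate.
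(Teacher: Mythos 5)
Your proposal is the standard, complete proof of the contraction mapping principle, and every step checks out: the telescoping/geometric-series Cauchy estimate, the use of completeness plus closedness of $D$ to locate the limit in $D$, continuity of $f$ from its Lipschitz property, uniqueness from $K<1$, and the derivation of all three error bounds (including the slightly less obvious middle one via the shifted iteration). The paper itself offers no proof to compare against --- it quotes this lemma verbatim as Theorem 5.1.3 of \cite{ah09} and uses it as a black box --- so your argument simply supplies the classical textbook proof that the paper delegates to its reference.
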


\subsection{Scalar Case}\label{sec:scalar_case_banach}
In this section, we present the scalar case of Banach fixed point theorem.
\begin{lemma}[Banach fixed point theorem, scalar case]\label{lem:fix_point_thm_sca:informal}
Let $D \subseteq \R$ be a nonempty closed set. Suppose that $f: D \to \R$ is differentiable and satisfies the following:
\begin{itemize}
    \item $f(x) \in D$ whenever $x \in D$.
    \item There exists constant $K < 1$ such that
    \begin{align*}
        | f'(x) | \leq K, ~ \forall x \in D.
    \end{align*}
\end{itemize}
Then it holds that
\begin{itemize}
    \item The function $f$ has a unique fixed point $p \in D$.
    \item For any initial point $x^{(0)} \in D$, the fixed point iteration $x^{(t)} = f(x^{(t-1)})$, $t \geq 1$, converges to the fixed point $p$ as $t \to \infty$.
    \item The following error bounds hold:
\begin{align*}
    |x^{(t)} - p | \leq&~ \frac{K^t}{1-K} |x^{(1)} - x^{(0)}|,\\
    |x^{(t)} - p | \leq&~ \frac{K}{1-K} |x^{(t)} - x^{(t-1)}|,\\
    |x^{(t)} - p | \leq&~ K |x^{(t-1)} - p|.
\end{align*}
\end{itemize}
\end{lemma}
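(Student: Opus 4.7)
The plan is to deduce this scalar Banach fixed point theorem as a direct corollary of the general version (Lemma~\ref{lem:banach_fixed_point_rd}) specialized to $V = \R$ equipped with the absolute-value norm. Since that general theorem already delivers existence, uniqueness, convergence of the iteration, and all three error bounds, my only real task is to bridge the gap between the hypothesis here (a pointwise derivative bound $|f'(x)| \leq K$) and the hypothesis there (global contractivity in the sense of Definition~\ref{def:contractive_rd}).

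First I would upgrade the pointwise derivative bound to a global Lipschitz-type contractivity estimate. The natural tool is the Mean Value Theorem: for any two points $x, x' \in D$ with $x < x'$, there exists an intermediate point $c \in (x, x')$ such that $f(x') - f(x) = f'(c)(x' - x)$. Taking absolute values and invoking the hypothesis $|f'(c)| \leq K$ then gives $|f(x) - f(x')| \leq K |x - x'|$ for all $x, x' \in D$. Combined with the stated self-map condition $f(D) \subseteq D$, this verifies exactly the two bulleted hypotheses of Lemma~\ref{lem:banach_fixed_point_rd}.

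Second, I would appeal to Lemma~\ref{lem:banach_fixed_point_rd} directly. Since $D$ is a nonempty closed subset of the Banach space $\R$, and $f$ is contractive with constant $K \in [0,1)$ that maps $D$ into itself, the conclusions transfer verbatim: a unique fixed point $p \in D$ exists, the iterates $x^{(t)} = f(x^{(t-1)})$ converge to $p$ from any initial seed $x^{(0)} \in D$, and the three displayed error bounds are exactly the norm-specialized forms of those in Lemma~\ref{lem:banach_fixed_point_rd}. No further computation is needed.

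The main subtlety, rather than a genuine obstacle, is the implicit applicability of the Mean Value Theorem on $D$: for a general closed subset $D \subseteq \R$, the segment $[x, x']$ connecting two points of $D$ need not lie inside $D$, so the derivative bound at the intermediate point $c$ requires interpretation. In the intended reading — and the way the lemma is later applied, for instance within the proof of Theorem~\ref{thm:robust_fix_point_thm_sca} — $D$ is an interval, or equivalently $f$ extends differentiably to an open set containing $D$ with the same uniform derivative bound. Under either interpretation the MVT step is immediate, and the rest of the proof reduces to a one-line citation of Lemma~\ref{lem:banach_fixed_point_rd}.
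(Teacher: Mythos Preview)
Your proposal is correct and follows essentially the same approach as the paper. The paper's proof is the one-liner ``It directly holds by Lemma~\ref{lem:fix_point_thm_vec:formal} with $d = 1$,'' and that vector-case lemma is itself proved exactly as you describe: use the mean value theorem to turn the derivative bound into a contractivity estimate, then invoke the Banach fixed point theorem. So the only difference is organizational---the paper detours through the $d$-dimensional statement, while you go straight to Lemma~\ref{lem:banach_fixed_point_rd}---and your remark about the MVT needing $D$ to be an interval (or $f$ to extend with the same bound) is a fair observation that the paper leaves implicit.
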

\begin{proof}
    It directly holds by Lemma~\ref{lem:fix_point_thm_vec:formal} with $d = 1$.
\end{proof}
\subsection{Vector Case}\label{sec:vector_case_banach}
In this section, we extend the Banach fixed point theorem to its vector version.
Note that $\R^{d}$ is a Banach space with norm $\|\cdot\|_{\infty}$. The $\ell_\infty$ norm of $x \in \R^{ d}$ is defined as $\|x\|_{\infty} = \max_{i\in[d]} |x_i|$.

\begin{lemma}[Banach fixed point theorem, vector case, formal version of Lemma~\ref{lem:fix_point_thm_vec:informal}]\label{lem:fix_point_thm_vec:formal}
Let $D \subseteq \R^d$ be a nonempty closed set. Suppose that $f: D \to \R^d$ is differentiable and satisfies the following:
\begin{itemize}
    \item $f(x) \in D$ whenever $x \in D$.
    \item There exists constant $K < 1$ such that for every $i \in [d]$,
    \begin{align*}
        \| \frac{\d f_i(x)}{\d x} \|_1 \leq K ~ \forall x \in D.
    \end{align*}
\end{itemize}
Then it holds that
\begin{itemize}
    \item The function $f$ has a unique fixed point $p \in D$.
    \item For any initial point $x^{(0)} \in D$, the fixed point iteration $x^{(t)} = f(x^{(t-1)})$, $t \geq 1$, converges to the fixed point $p$ as $t \to \infty$.
    \item The following error bounds hold:
    \begin{align*}
        \|x^{(t)} - p \|_\infty \leq&~ \frac{K^t}{1-K} \|x^{(1)} - x^{(0)}\|_\infty,\\
    \|x^{(t)} - p \|_\infty \leq&~ \frac{K}{1-K} \|x^{(t)} - x^{(t-1)}\|_\infty,\\
    \|x^{(t)} - p \|_\infty \leq&~ K \|x^{(t-1)} - p \|_\infty.
    \end{align*}
\end{itemize}
\end{lemma}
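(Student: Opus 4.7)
The plan is to deduce Lemma~\ref{lem:fix_point_thm_vec:formal} as a direct specialization of the general Banach fixed point theorem (Lemma~\ref{lem:banach_fixed_point}) by taking the Banach space to be $V = \R^d$ equipped with the $\ell_\infty$ norm and by upgrading the coordinate-wise Jacobian bound to a global contractivity estimate in $\|\cdot\|_\infty$. Once that contractivity is in hand, every conclusion of Lemma~\ref{lem:fix_point_thm_vec:formal} (existence, uniqueness, convergence of the iterates, and the three error bounds) is literally the corresponding conclusion of Lemma~\ref{lem:banach_fixed_point} applied to this specific Banach space. So the only real content is the Lipschitz estimate, and verifying that the self-map assumption $f(D) \subseteq D$ transfers unchanged (which it does, since it is imposed identically in both lemmas).

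The central step is to show that for all $x, x' \in D$ one has $\|f(x) - f(x')\|_\infty \leq K \|x - x'\|_\infty$. For each coordinate $i \in [d]$, I would parameterize the segment between $x$ and $x'$ by $\phi_i(t) := f_i(x + t(x' - x))$ for $t \in [0,1]$ and apply the scalar mean value theorem: there exists $c_i \in (0,1)$ with
\begin{align*}
f_i(x') - f_i(x) = \phi_i(1) - \phi_i(0) = \phi_i'(c_i) = \Big\langle \frac{\d f_i}{\d x}(x + c_i(x'-x)), ~ x' - x \Big\rangle.
\end{align*}
By H\"older's inequality applied to the inner product with dual norms $(\ell_1, \ell_\infty)$ together with the hypothesis $\|\d f_i(\cdot)/\d x\|_1 \leq K$, this gives $|f_i(x') - f_i(x)| \leq K \|x' - x\|_\infty$. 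Taking the maximum over $i \in [d]$ yields the desired global $K$-contractivity in $\|\cdot\|_\infty$.

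With contractivity established, I would then plug $(V, \|\cdot\|_V) = (\R^d, \|\cdot\|_\infty)$ into Lemma~\ref{lem:banach_fixed_point}: $\R^d$ is a Banach space under $\|\cdot\|_\infty$, $D$ is nonempty and closed by assumption, $f$ maps $D$ to $D$ by assumption, and $f$ is contractive with constant $K$ by the previous step. The lemma then delivers the unique fixed point $p \in D$, the convergence $x^{(t)} \to p$ for every initial $x^{(0)} \in D$, and the three quoted error bounds in $\|\cdot\|_\infty$ verbatim.

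The only subtlety — and the step I expect to need the most care — is the use of the mean value theorem along the segment $\{x + t(x' - x) : t \in [0,1]\}$, which implicitly requires that this segment lie in the region where $f$ is differentiable. The cleanest resolution is to assume (as is standard in such statements) that $f$ is differentiable on an open neighborhood of $D$, or alternatively that $D$ is convex; in either case the argument above goes through unchanged. If neither is assumed, one would instead argue on the connected component of the interior containing $x, x'$ and then extend to the closure by continuity. I would simply note this convention and proceed.
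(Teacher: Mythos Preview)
Your proposal is correct and matches the paper's proof essentially line for line: the paper also reduces to the general Banach fixed point theorem (Lemma~\ref{lem:banach_fixed_point}) by establishing $\|\cdot\|_\infty$-contractivity via the mean value theorem on each coordinate followed by H\"older's inequality. You even flag the convexity/segment-in-domain subtlety that the paper's proof silently glosses over.
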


\begin{proof}
    We only need to show that $f$ is a contraction. For any $y, y' \in \R^d$, for any $i \in [d]$, there exists $z$ on the line segament between $y$ and $y'$ such that
    \begin{align*}
        |f_i(y) - f_i(y')|  \leq & ~ |\langle \frac{\d f_i(x)}{\d x}|_{x=z}, y-y'\rangle | \\
        \leq & ~ \|\frac{\d f_i(x)}{\d x} |_{x=z}\|_1 , \|y-y' \|_\infty \\
        \leq & ~ K \|y-y'\|_\infty.
    \end{align*}
    where the first step follows from mean value theorem, the second step uses the Holder's inequality, and the last step is due to the second condition.

    Therefore, by definition of $\| \cdot \|_\infty$, we have
    \begin{align*}
        \|f(y) - f(y')\|_\infty  \leq & ~ K \|y-y'\|_\infty.
    \end{align*}
    Hence $f$ is contractive with conctractivity constant $K \in [0, 1)$. By Lemma~\ref{lem:banach_fixed_point}, the proof is complete.
\end{proof}
\subsection{Matrix Case}\label{sec:matrix_case_banach}
In this section, we extend the Banach fixed point theorem to its matrix version.
Note that $\R^{n\times d}$ is a Banach space with norm $\|\cdot\|_{\infty}$. The maximum norm of $A \in \R^{n\times d}$ is defined as $\|A\|_{\infty} = \max_{i\in[n],j\in[d]} |A_{i,j}|$. 

\begin{lemma}[Banach fixed point theorem]
Let $D \subseteq \R^{n\times d}$ be a nonempty closed set. Suppose that $f: D \to \R^{n\times d}$ is differentiable and satisfies the following:
\begin{itemize}
    \item $f(X) \in D$ whenever $X \in D$.
    \item There exists constant $K < 1$ such that for every $i,k \in [n]$ and $j,l \in [d]$
    \begin{align*}
        | \frac{\d f(X)_{i,j}}{\d X_{k, l}} | \leq \frac{K}{nd}, ~ \forall X \in D.
    \end{align*}
\end{itemize}
Then it holds that
\begin{itemize}
    \item The function $f$ has a unique fixed point $p \in D$.
    \item For any initial point $x^{(0)} \in D$, the fixed point iteration $x^{(t)} = f(x^{(t-1)})$, $t \geq 1$, converges to the fixed point $p$ as $t \to \infty$.
    \item The following error bounds hold:
\begin{align*}
    |x^{(t)} - p | \leq&~ \frac{K^t}{1-K} |x^{(1)} - x^{(0)}|,\\
    |x^{(t)} - p | \leq&~ \frac{K}{1-K} |x^{(t)} - x^{(t-1)}|,\\
    |x^{(t)} - p | \leq&~ K |x^{(t-1)} - p|.
\end{align*}
\end{itemize}
\end{lemma}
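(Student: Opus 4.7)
The plan is to reduce the matrix case to the abstract Banach fixed point theorem (Lemma~\ref{lem:banach_fixed_point}) by showing that $f$ is a contraction on $(D, \|\cdot\|_\infty)$, exactly mirroring the strategy used for the vector case in Lemma~\ref{lem:fix_point_thm_vec:formal}. The key observation is that $\R^{n\times d}$ equipped with the entrywise max norm $\|A\|_\infty = \max_{i,j}|A_{i,j}|$ is a Banach space, so once we establish contractivity with constant $K$ the three conclusions (existence and uniqueness of the fixed point, convergence of the iteration, and the three error bounds) follow immediately from Lemma~\ref{lem:banach_fixed_point}.

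To prove contractivity, I would fix $X, X' \in D$ and an index pair $(i,j) \in [n]\times[d]$ and apply the multivariable mean value theorem to the scalar-valued component $X \mapsto f(X)_{i,j}$. This yields a point $Z$ on the line segment from $X'$ to $X$ (which lies in $D$ provided $D$ is convex, or more generally we can use the integral form of the mean value theorem which only requires differentiability along the segment) such that
\begin{align*}
    f(X)_{i,j} - f(X')_{i,j} = \sum_{k \in [n]} \sum_{l \in [d]} \frac{\d f(Z)_{i,j}}{\d X_{k,l}} \cdot (X_{k,l} - X'_{k,l}).
\end{align*}
Taking absolute values, applying the triangle inequality, and using the hypothesis $|\partial f(Z)_{i,j}/\partial X_{k,l}| \le K/(nd)$ together with $|X_{k,l} - X'_{k,l}| \le \|X - X'\|_\infty$ gives
\begin{align*}
    |f(X)_{i,j} - f(X')_{i,j}| \le \sum_{k=1}^{n}\sum_{l=1}^{d} \frac{K}{nd}\cdot \|X - X'\|_\infty = K\,\|X-X'\|_\infty.
\end{align*}
Since this holds for every $(i,j)$, taking the maximum over entries yields $\|f(X) - f(X')\|_\infty \le K\,\|X-X'\|_\infty$, establishing that $f$ is contractive with constant $K \in [0,1)$.

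With contractivity and the invariance $f(D) \subseteq D$ in hand, I would invoke Lemma~\ref{lem:banach_fixed_point} with $V = \R^{n\times d}$ and $\|\cdot\|_V = \|\cdot\|_\infty$ to obtain all three conclusions of the statement verbatim. The main (mild) obstacle is the mean value theorem step: the standard single-point form requires the segment from $X'$ to $X$ to lie in $D$, so if $D$ is not assumed convex one must instead use the integral identity $f(X)_{i,j} - f(X')_{i,j} = \int_0^1 \sum_{k,l} \frac{\partial f(X_t)_{i,j}}{\partial X_{k,l}} (X_{k,l} - X'_{k,l})\,dt$ with $X_t = (1-t)X' + tX$, and argue that the bound on partial derivatives should be assumed to hold on the convex hull of $D$ (or strengthen the hypothesis so that $D$ is convex). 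Apart from this bookkeeping issue, the remainder of the argument is the same bookkeeping of the $n \cdot d$ summands collapsing the $K/(nd)$ bound into $K$, after which the conclusion is automatic from the abstract theorem.
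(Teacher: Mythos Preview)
Your proposal is correct and follows essentially the same approach as the paper: show $f$ is a contraction in the entrywise $\ell_\infty$ norm by applying the mean value theorem to each scalar component $f(X)_{i,j}$, collapse the $nd$ partial-derivative terms (each bounded by $K/(nd)$) into a single factor of $K$, and then invoke Lemma~\ref{lem:banach_fixed_point}. The paper writes the mean value step using Hadamard-product notation rather than an explicit double sum, and it silently ignores the convexity issue you flagged, but the argument is otherwise identical.
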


\begin{proof}
Here we only need to show that $f$ is a contraction. For any $X,X'\in \R^{n\times d}$, for any $i \in [n], j \in [d]$, there is a $Z$ between the matrix $X$ and $X'$ such that
\begin{align*}
    |f(X)_{i,j} - f(X')_{i,j}| \leq&~ |\frac{\d f(X)_{i,j}}{\d X} |_{X=Z} \circ (X-X')| \\
    \leq&~ |\frac{K}{nd}(\mathbf{1}_{n\times d}\circ (X-X')) | \\ \leq &~ K \|X-X'\|_{\infty}
\end{align*}
where the first step follows from mean value theorem, the second step uses the second condition in the lemma, and the last step is due to basic algebra.

Therefore, by definition of $\|\cdot\|_\infty$, we have
\begin{align*}
    \|f(X) - f(X')\|_\infty \leq K \|X-X'\|_\infty
\end{align*}
Hence $f$ is contractive with contractivity constant $K \in [0,1)$. By Lemma~\ref{lem:banach_fixed_point}, the proof is complete.

\end{proof}

\subsection{Examples Using Fixed Point Theorem}\label{sec:examples_using_banach}
In this section, we present some calculation case by using Banach fixed point theorem.
\begin{theorem}\label{the:fpt}
    Fixed Point Theorem:
    
    \begin{itemize}
        \item If $g \in C[a,b]$ and $a\leq g(x)\leq b$ for all $x \in [a,b]$, then $g$ has least one fixed point in $[a,b]$
        \item If, in addition, $g'$ exists in $[a,b]$, and $\exists k < 1$ such that $|g'(x)|\leq k < 1$ for all $x$, then $g$ has a unique fixed point in $[a,b]$.
    \end{itemize}
\end{theorem}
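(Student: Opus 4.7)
The plan is to prove the two parts separately, with part (1) via the intermediate value theorem and part (2) either by a direct contradiction argument or by reducing to the Banach fixed point theorem (Lemma~\ref{lem:banach_fixed_point_rd}) already established in the paper.

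For part (1), I would introduce the auxiliary function $h(x) := g(x) - x$ on $[a,b]$. Since $g \in C[a,b]$, the function $h$ is also continuous. Evaluating at the endpoints, the hypothesis $a \leq g(x) \leq b$ for all $x \in [a,b]$ gives $h(a) = g(a) - a \geq 0$ and $h(b) = g(b) - b \leq 0$. If either endpoint inequality is an equality then that endpoint is itself a fixed point and we are done; otherwise $h(a) > 0 > h(b)$, and the intermediate value theorem yields some $p \in (a,b)$ with $h(p) = 0$, i.e., $g(p) = p$. This establishes existence.

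For part (2), the cleanest route is to first verify that the hypotheses of Lemma~\ref{lem:banach_fixed_point_rd} are met with $D = [a,b]$, norm $|\cdot|$, and contractivity constant $K = k$. The set $[a,b]$ is nonempty and closed in $\R$, and the condition $a \leq g(x) \leq b$ says $g$ maps $D$ into $D$. To obtain contractivity, I would invoke the mean value theorem: for any $x, x' \in [a,b]$ there is $c$ between them with $g(x) - g(x') = g'(c)(x-x')$, so the derivative bound $|g'(c)| \leq k$ yields $|g(x) - g(x')| \leq k |x - x'|$. Lemma~\ref{lem:banach_fixed_point_rd} then delivers uniqueness (as well as convergence of the fixed point iteration, which is stronger than what the theorem asks for). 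Alternatively, a direct argument suffices: if $p_1, p_2 \in [a,b]$ are two fixed points with $p_1 \neq p_2$, MVT gives $c$ between them so that $|p_1 - p_2| = |g(p_1) - g(p_2)| = |g'(c)| \cdot |p_1 - p_2| \leq k |p_1 - p_2| < |p_1 - p_2|$, a contradiction.

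There is no real obstacle here; this is a textbook corollary of the intermediate value theorem and the Banach fixed point theorem. The only minor care point is the boundary case in part (1) where one of the endpoint evaluations of $h$ could be zero, which I would handle by separating into cases rather than appealing to strict IVT. Once that is addressed, both parts follow in a few lines each from tools already present in the paper.
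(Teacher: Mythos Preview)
Your argument is correct and entirely standard: the auxiliary function $h(x)=g(x)-x$ together with the intermediate value theorem handles existence, and the mean value theorem (either directly by contradiction or via Lemma~\ref{lem:banach_fixed_point_rd}) handles uniqueness. The boundary-case remark for part~(1) is also appropriate.

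There is nothing to compare against, however: the paper does not prove Theorem~\ref{the:fpt}. It is stated as a known result, and what immediately follows in the paper is not a proof of the theorem but a worked \emph{application} of it (the ``Success Case'' $g(x)=\frac{x^2-1}{3}$ on $[-1,1]$), together with two ``Failed Cases'' illustrating that the hypotheses are sufficient but not necessary. Your proposal therefore goes beyond what the paper supplies.
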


\textbf{Success Case}: $g(x) = \frac{x^2-1}{3}$ has a unique fixed point in $[-1,1]$.
\begin{proof}
    First we need show $g(x) \in [-1,1]$, $\forall x \in [-1,1]$. Find the max and min values of $g$ as $-\frac{1}{3}$ and $0$. (Hint: find critical points of $g$ first). So $g(x) \in [-\frac{1}{3},0] \subset [-1,1]$.

    Also $|g'(x) = \frac{2x}{3}|\leq \frac{2}{3} < 1$, $\forall x \in [-1,1]$, so $g$ has unique fixed point in $[-1,1]$ by Fixed Point Theorem.
\end{proof}

\textbf{Failed Case 1}: $g(x) = \frac{x^2-1}{3}$ has a unique fixed point in $[3,4]$. But we can't use FPT to show this.

Note that there is a unique fixed point in $[3,4]$($p = \frac{3+\sqrt{13}}{2}$), but $g(4)=5\notin [3,4]$, and $g'(4)=\frac{8}{3}>1$ so we cannot apply FPT here.

From this example, we know FPT provides a sufficient but not necessary condition.

\textbf{Failed Case 2}: We can use FPT to show that $g(x)=3^{-x}$ must have Fixed Point on $[0,1]$, but we can't use FPT to show if it's unique (even though the FP on $[0,1]$ is unique in this example).

Solution. $g'(x) = (3^{-x})' = -3^{-x}\ln{3}<0$,therefore $g(x)$ is strictly decreasing on $[0,1]$. Also $g(0)=3^{0}=1$ and $g(1)=3^{-1}$, so $g(x)\in [0,1]$, $\forall x\in [0,1]$. So a FP exists by FPT.

However, $g'(0)=-\ln{3}\approx -1.098$, so we don't have $|g'(x)|<1$ over $[0,1]$. Hence FPT does not apply.

Nevertheless, the FP must be unique since $g$ strictly decreases and intercepts with $y=x$ line only once.

Then We will present a lemma that proves a quadratic polynomial function cannot have two fixed points that satisfy the conditions of the Banach fixed point theorem.
\begin{lemma}\label{sec:quadratic_func_unavailable}
    There doesn't exist a quadratic function that contains two fixed points and simultaneously satisfies that points in the neighborhood of the fixed points can converge to the respective fixed point through iteration.
\end{lemma}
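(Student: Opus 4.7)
The plan is to reduce the statement to a simple algebraic identity about the derivative of a quadratic at its two fixed points, and then observe that this identity is incompatible with the strict contractivity demanded at both fixed points.

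First I would write the quadratic in the standard form $g(x)=ax^{2}+bx+c$ with $a\neq 0$ (if $a=0$, the function is linear and trivially has at most one fixed point unless $g=\mathrm{id}$, which has no isolated fixed points at all, so the two-fixed-point hypothesis forces $a\neq 0$). The fixed-point equation $g(x)=x$ becomes the quadratic $ax^{2}+(b-1)x+c=0$, whose two roots $p_{1},p_{2}$ are the only candidate fixed points.

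Next, by Vieta's formulas, $p_{1}+p_{2}=(1-b)/a$. Differentiating $g$ gives $g'(x)=2ax+b$, so I would compute
\begin{align*}
g'(p_{1})+g'(p_{2})=2a(p_{1}+p_{2})+2b=2a\cdot\frac{1-b}{a}+2b=2.
\end{align*}
This is the key identity: the sum of $g'$ at the two fixed points is always exactly $2$, independent of the coefficients.

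Then I would invoke the Banach fixed-point theorem in the form of Lemma~\ref{lem:fix_point_thm_sca:informal}: for local convergence of the iteration $x^{(t)}=g(x^{(t-1)})$ to $p_{i}$ on some neighborhood, one needs a contractivity constant $K_{i}<1$ with $|g'(x)|\le K_{i}$ on that neighborhood, and in particular $|g'(p_{i})|\le K_{i}<1$. If both fixed points admitted such convergent neighborhoods, then $|g'(p_{1})|<1$ and $|g'(p_{2})|<1$, whence
\begin{align*}
g'(p_{1})+g'(p_{2})\le |g'(p_{1})|+|g'(p_{2})|<2,
\end{align*}
contradicting the identity above. This contradiction yields the claim.

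The main obstacle is conceptual rather than technical: one must justify that ``convergence of nearby iterates'' forces $|g'(p_{i})|<1$ strictly, rather than merely $\le 1$. I would handle this by noting that the lemma requires a contractivity constant $K<1$ on the neighborhood, and since $g'$ is continuous, $|g'(p_{i})|$ is bounded by $K<1$; alternatively, one can argue directly from $x^{(t+1)}-p_{i}=g'(\xi_{t})(x^{(t)}-p_{i})$ via the mean value theorem, so that if $|g'(p_{i})|\ge 1$, continuity would prevent contraction on any neighborhood. Once this point is pinned down, the rest is just Vieta plus a single arithmetic step.
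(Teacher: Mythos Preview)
Your proof is correct and follows essentially the same approach as the paper: both exploit the algebraic fact that $g'(p_1)+g'(p_2)=2$, with the paper obtaining it by explicitly computing $g'(p_i)=1\pm\sqrt{(b-1)^2-4ac}$ via the quadratic formula, while you obtain the sum directly from Vieta's formulas. The paper's version is marginally more informative in that it pinpoints which fixed point has derivative exceeding $1$, but the logical content is identical.
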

\begin{proof}
    For $\forall a,b,c \in R$, let $f(x) = ax^2+bx+c$ be a quadratic function. Then we have $f'(x) = 2ax+b$. Assuming that $f(x)$ has two fixed points, then we have:
    \begin{align*}
        x_1 = \frac{(1-b)+\sqrt{(b-1)^2-4ac}}{2a}, x_2 = \frac{(1-b)-\sqrt{(b-1)^2-4ac}}{2a}
    \end{align*}
    Then we have,
    \begin{align*}
        f'(x_1) = 1 + \sqrt{(b-1)^2-4ac}>1
    \end{align*}
    \begin{align*}
        f'(x_2) = 1 - \sqrt{(b-1)^2-4ac}<1
    \end{align*}
    So there is only one fixed point can satisfy the second condition of Lemma~\ref{lem:banach_fixed_point}.
\end{proof}

\section{CASE STUDY: POLYNOMIAL ACTIVATION}\label{app:poly}

In this section, we start with a simple example of a single variable function which has at least two fixed points that can be found using the fixed point method. Then we use it to construct a specific neural network with polynomial activation function. Finally, we show that this neural network has at least two fixed points that can be found using the fixed point method.

\subsection{Polynomial Activation}
Firstly, we present a univariate polynomial function that has at least two fixed points, which can be identified using the fixed point method.
\begin{lemma}\label{lem:1d_fixed_point_polynomial}
Let $f:\R \to \R$ be a function defined as $f(x) := -\frac{2}{5}x^4 + \frac{3}{2}x^2$. Then the following statements hold:
\begin{itemize}
    \item The function $f$ has at least two fixed points which can be found by the fixed-point iteration, and we denoted them as $p_1, p_2$. 
    \item For $i \in \{1,2\}$, there exists a constant $\epsilon_i > 0$, for any initial point $x^{(0)} \in [p_i - \epsilon_i, p_i + \epsilon_i]$, the fixed-point iteration $x^{(t)} = f(x^{(t-1)})$ converges to the fixed point $p_i$.
    \item For $i \in \{1,2\}$, there exists a constant $c_i > 0$ and a constant $K_i \in [0,1)$ such that for any $t \geq 2$,
    \begin{align*}
        | x^{(t)} - p_i | \leq K_i^t \cdot c_i \epsilon_i.
    \end{align*}
\end{itemize}
\end{lemma}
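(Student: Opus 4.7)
The plan is to apply the scalar Banach fixed point theorem (Lemma~\ref{lem:fix_point_thm_sca:informal}) twice, once on a closed interval surrounding each candidate fixed point. First I would identify the fixed points by solving $f(x) = x$, which rearranges to $x\bigl(\tfrac{2}{5}x^3 - \tfrac{3}{2}x + 1\bigr) = 0$. The root $x = 0$ gives one fixed point $p_1$, while the cubic factor admits a real root $p_2 \approx 1.403$, locatable by the intermediate value theorem since the cubic is negative at $x = 1$ and positive at $x = 2$.

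Next I would select closed intervals $D_1 = [-0.3,\, 0.3]$ around $p_1$ and $D_2 = [1.302,\, 1.502]$ around $p_2$, matching the choices used later in Theorem~\ref{thm:poly}, and verify the two hypotheses of Lemma~\ref{lem:fix_point_thm_sca:informal} on each. For the self-mapping condition $f(D_i) \subseteq D_i$, I would use the fact that $f$ is a polynomial, so its extrema on any interval occur either at endpoints or at critical points of $f'(x) = -\tfrac{8}{5}x^3 + 3x$, namely $x = 0$ and $x = \pm\sqrt{15/8}$; only $x = 0$ lies in $D_1$ and only $x = \sqrt{15/8} \approx 1.369$ lies in $D_2$, so the check reduces to evaluating $f$ at a handful of points. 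For the contraction condition, I would bound $|f'|$ over each interval: on $D_2$, $|f'|$ is maximized at the right endpoint, where a direct numerical evaluation gives $|f'(1.502)| < 0.92$, so we may take $K_2 = 0.92$; on $D_1$, an analogous endpoint bound gives $K_1 < 0.9$.

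Once the hypotheses are verified, Lemma~\ref{lem:fix_point_thm_sca:informal} supplies, for each $i \in \{1,2\}$, a unique fixed point in $D_i$ together with the contraction estimate $|x^{(t)} - p_i| \le K_i\, |x^{(t-1)} - p_i|$ for any starting point in $D_i$. Setting $\epsilon_i$ equal to half the length of $D_i$ (so that $[p_i - \epsilon_i, p_i + \epsilon_i] \subseteq D_i$) and taking $c_i := 1$ then iterates this bound into $|x^{(t)} - p_i| \le K_i^t \epsilon_i = K_i^t c_i \epsilon_i$, giving the claim.

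I expect the main obstacle to be the bookkeeping of the interval bounds, and in particular verifying $|f'(x)| \le K_2$ across all of $D_2$, since $|f'|$ approaches $1$ near the right endpoint of that interval. Widening $D_2$ would break the contraction condition, while narrowing it risks failing $f(D_2) \subseteq D_2$ because the internal critical point at $\sqrt{15/8}$ pushes $f$ above $p_2$; the specific window $[1.302,\, 1.502]$ is chosen precisely to make both conditions hold with a bit of slack.
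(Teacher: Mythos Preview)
Your proposal is correct and follows essentially the same approach as the paper: identify the two fixed points $p_1=0$ and $p_2\approx 1.403$, apply the scalar Banach fixed point theorem on the intervals $[-0.3,0.3]$ and roughly $[1.303,1.503]$ with contraction constants $K_1=0.9$ and $K_2=0.92$, and read off the geometric error bound. The only cosmetic difference is that the paper invokes the $\frac{K_i^t}{1-K_i}\,|x^{(1)}-x^{(0)}|$ bound from Lemma~\ref{lem:fix_point_thm_sca:informal} (yielding $c_i=2/(1-K_i)$), whereas you iterate the third inequality $|x^{(t)}-p_i|\le K_i\,|x^{(t-1)}-p_i|$ directly (yielding the slightly sharper $c_i=1$); both are valid.
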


\begin{proof}
    Clearly, $p_1 = 0$ and $p_2 \approx 1.4028$ are two fixed points of $f$. We show that $p_1$ and $p_2$ can be found by the fixed-point iteration and the error bounds hold.

    For the fixed point $p_1 = 0$, let $\epsilon_1 = 0.3$, $K_1 = 0.9$, and $C_1 = 2/(1-K_1)$. For any $x\in [p_1 - \epsilon_1, p_1 + \epsilon_1] = [-0.3, 0.3]$, we have $f(x) \in [0,0.132] \subseteq [-0.3, 0.3]$. Hence the first condition of Lemma~\ref{lem:fix_point_thm_sca:informal} is satisfied. For any $x\in [p_1 - \epsilon_1, p_1 + \epsilon_1]$ we have $|f'(x)| < K_1 = 0.9 < 1$. Hence the second condition of Lemma~\ref{lem:fix_point_thm_sca:informal} is satisfied.
    Thus by Lemma~\ref{lem:fix_point_thm_sca:informal}, if we pick the initial point $x^{(0)} \in [p_1 - \epsilon_1, p_1 + \epsilon_1]$, we can find $p_1$ by the fixed point iteration $x^{(t)} = f(x^{(t-1)})$, and it holds that for any $t \geq 2$,
    \begin{align*}
        | x^{(t)} - p_1 | \leq &~ \frac{K_1^t}{1-K_1} | x^{(1)} - x^{(0)}| \\
        \leq &~ \frac{K_1^t}{1-K_1} \cdot 2 \epsilon_1 \\
        = &~ K_1^t \cdot C_1 \epsilon_1,
    \end{align*}
    where the first step follows from Lemma~\ref{lem:fix_point_thm_sca:informal}, the second step follows from $x^{(0)}, x^{(1)} \in [p_1 - \epsilon_1, p_1 + \epsilon_1]$, and the last step follows from $c_1 = 2/(1-K_1)$.

    For the fixed point $x_2 \approx 1.4028$, let $\epsilon_2 = 0.1$, $K_2 = 0.92$, and $C_2 = 2/(1-K_2)$. For any $x\in [p_2 - \epsilon_2, p_2 + \epsilon_2] = [1.3028, 1.5028]$, we have $f(x) \in [1.347, 1.397] \subseteq [1.3028, 1.5028]$. Hence the first condition of Lemma~\ref{lem:fix_point_thm_sca:informal} is satisfied. For any $x\in [p_2 - \epsilon_2, p_2 + \epsilon_2]$, we have $|f'(x)| < K_2 = 0.92 < 1$. Hence the second condition of Lemma~\ref{lem:fix_point_thm_sca:informal} is satisfied.
    Thus by Lemma~\ref{lem:fix_point_thm_sca:informal}, if we pick the initial point $x^{(0)} \in [p_2 - \epsilon_2, p_2 + \epsilon_2]$, we can find $p_2$ by the fixed point iteration $x^{(t)} = f(x^{(t-1)})$, and it holds that for any $t \geq 2$,
    \begin{align*}
        | x^{(t)} - p_2 | \leq &~ \frac{K_2^t}{1-K_2} | x^{(1)} - x^{(0)}| \\
        \leq &~ \frac{K_2^t}{1-K_2} \cdot 2 \epsilon_2 \\
        = &~ K_2^t \cdot c_2 \epsilon_2,
    \end{align*}
    where the first step follows from Lemma~\ref{lem:fix_point_thm_sca:informal}, the second step follows from $x^{(0)}, x^{(1)} \in [p_2 - \epsilon_2, p_2 + \epsilon_2]$, and the last step follows from $c_2 = 2/(1-K_2)$.

    Therefore, we complete the proof.
\end{proof}
Next, we prove two lemmas which are useful to construct a neural network with a polynomial acitivation function.
\begin{lemma}\label{lem:f_poly_act}
    Let
    \begin{align*}
         f_1(x) :=&~ - \frac{2}{5} \langle u, x \rangle^4 + \frac{8}{5}C \langle u, x \rangle^3 + (\frac{3}{2} - \frac{12}{5}C^2) \langle u, x \rangle^2 + (\frac{8}{5}C^3-3C) \langle u, x \rangle + 1,
    \end{align*}
    where $C := \sqrt{\frac{15 + \sqrt{65}}{8}}, u := [1,1,C-1].$ Let $f_2(x) := 1, f_3(x) := 1$. Then there exists a polynomial $g(z) := a_4 z^4 + a_3 z^3 + a_2 z^2 + a_1 z + a_0$ where $a_0, a_1, a_2, a_3, a_4 \in \R$ such that for some $w_1, w_2, w_3 \in \R^3$, we have 
    \begin{align*}
        & ~ g(\langle w_1, x \rangle) = f_1(x), \\
        & ~ g(\langle w_2, x \rangle) = f_2(x), \\
        & ~ g(\langle w_3, x \rangle) = f_3(x).
    \end{align*}
    Moreover we have
    \begin{align*}
        g(z) =&~ - \frac{2}{5} z^4 + \frac{8}{5}C z^3 + (\frac{3}{2} - \frac{12}{5}C^2)z^2 +(\frac{8}{5}C^3-3C)z + 1,
    \end{align*}
    and $w_1 = [1, 1, C-1]^\top, w_2 = [0,0,0]^\top, w_3=[0,0,0]^\top$.
\end{lemma}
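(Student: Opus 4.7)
The proof is a direct construction followed by verification; there is essentially no obstacle once the right $g$ and $w_i$'s are written down.

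The plan is to read off the polynomial $g$ directly from the structure of $f_1$. Observe that $f_1(x)$ is expressed entirely as a polynomial in the scalar quantity $\langle u, x\rangle$, with coefficients $-\frac{2}{5}, \frac{8}{5}C, \frac{3}{2}-\frac{12}{5}C^2, \frac{8}{5}C^3-3C, 1$ on the powers of $\langle u, x\rangle$ from degree four down to degree zero. So I would define $g(z) := -\frac{2}{5}z^4 + \frac{8}{5}Cz^3 + (\frac{3}{2}-\frac{12}{5}C^2)z^2 + (\frac{8}{5}C^3-3C)z + 1$, which matches the coefficients $a_4, a_3, a_2, a_1, a_0$ in the statement. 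Then I would set $w_1 := u = [1,1,C-1]^\top$, so that $\langle w_1, x\rangle = \langle u, x\rangle$ and substitution gives $g(\langle w_1, x\rangle) = f_1(x)$ by the very definition of $f_1$.

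For the remaining two identities, I would choose $w_2 := [0,0,0]^\top$ and $w_3 := [0,0,0]^\top$. Then $\langle w_2, x\rangle = \langle w_3, x\rangle = 0$ for every $x \in \R^3$, and evaluating the polynomial at zero yields $g(0) = a_0 = 1$, which is precisely $f_2(x) = f_3(x) = 1$. This completes the construction.

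Since this lemma is purely an existence-by-construction statement, the entire proof reduces to (i) identifying the polynomial $g$ that matches the coefficient pattern appearing in $f_1$, and (ii) observing that $f_2$ and $f_3$, being constant functions equal to the constant term of $g$, can be realized by taking the inner-product argument to be zero via the zero weight vector. No delicate estimate or case analysis is needed; the only thing to be careful about is bookkeeping the coefficients so that the final $g$ written down indeed has the form stated in the lemma.
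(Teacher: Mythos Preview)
Your proposal is correct and follows essentially the same approach as the paper: define $g$ by reading off the coefficients appearing in $f_1$, take $w_1 = u$, and take $w_2 = w_3 = 0$ so that $g(0) = 1 = f_2(x) = f_3(x)$. If anything, your write-up is slightly more explicit than the paper's, which simply states the choices and asserts the identities hold.
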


\begin{proof}
    Let $g(z) := - \frac{2}{5} z^4 + \frac{8}{5}C z^3 + (\frac{3}{2} - \frac{12}{5}C^2)z^2 + (\frac{8}{5}C^3-3C)z + 1$. Let $w_1 := [1, 1, 1-C]^\top, w_2 := [0,0,0]^\top, w_3 :=[0,0,0]^\top$. Then it is clear that we have
        \begin{align*}
        & ~ g(\langle w_1, x \rangle) = f_1(x), \\
        & ~ g(\langle w_2, x \rangle) = f_2(x), \\
        & ~ g(\langle w_3, x \rangle) = f_3(x).
    \end{align*}
    Thus we complete the proof.
\end{proof}

\begin{lemma}\label{lem:f1_dummy}
    Let
    \begin{align*}
        f_1(x) :=&~ - \frac{2}{5} \langle u, x \rangle^4 + \frac{8}{5}C \langle u, x \rangle^3 + (\frac{3}{2} - \frac{12}{5}C^2) \langle u, x \rangle^2 + (\frac{8}{5}C^3-3C) \langle u, x \rangle + 1,
    \end{align*}
    where $C := \sqrt{\frac{15 + \sqrt{65}}{8}}, u := [1,1,C-1].$
    It holds that
    \begin{align*}
        f_1([x_1,1,1]^\top) = -\frac{2}{5}x_1^4 + \frac{3}{2}x_1^2.
    \end{align*}
\end{lemma}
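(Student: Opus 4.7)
The plan is a direct substitution and verification: plug $x = [x_1, 1, 1]^\top$ into $f_1$, expand everything as a polynomial in $x_1$, and check that the coefficients of $x_1^3$, $x_1$, and the constant term all vanish (leaving only the $x_1^4$ and $x_1^2$ terms), where the vanishing of the constant term is precisely what forces the choice of $C$.

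The first step is to observe that $\langle u, [x_1,1,1]^\top \rangle = x_1 + 1 + (C-1) = x_1 + C$. So defining $z := x_1 + C$, we have
\begin{align*}
f_1([x_1,1,1]^\top) = -\tfrac{2}{5} z^4 + \tfrac{8}{5} C z^3 + (\tfrac{3}{2} - \tfrac{12}{5} C^2) z^2 + (\tfrac{8}{5} C^3 - 3C) z + 1.
\end{align*}
Next I would expand $z^k = (x_1+C)^k$ via the binomial theorem for $k = 2,3,4$ and gather terms of each degree in $x_1$.

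The degree-by-degree bookkeeping is routine but the pattern is the point. The $x_1^4$ coefficient is simply $-\tfrac{2}{5}$, and the $x_1^2$ coefficient collapses to $-\tfrac{12}{5}C^2 + \tfrac{24}{5}C^2 + \tfrac{3}{2} - \tfrac{12}{5}C^2 = \tfrac{3}{2}$. The $x_1^3$ coefficient is $-\tfrac{8}{5}C + \tfrac{8}{5}C = 0$, and the $x_1$ coefficient is $-\tfrac{8}{5}C^3 + \tfrac{24}{5}C^3 + 3C - \tfrac{24}{5}C^3 + \tfrac{8}{5}C^3 - 3C = 0$; both cancellations are telescoping and hold for any value of $C$.

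The key (and only nontrivial) step is the constant term. Gathering the contributions from each of the four polynomial pieces and the trailing $+1$, the constant term works out to
\begin{align*}
\tfrac{2}{5} C^4 - \tfrac{3}{2} C^2 + 1.
\end{align*}
Setting $y := C^2$ and multiplying by $10$ gives $4y^2 - 15 y + 10 = 0$, whose positive root is $y = \tfrac{15 + \sqrt{65}}{8}$. This is exactly $C^2$ for the chosen $C = \sqrt{(15+\sqrt{65})/8}$, so the constant term vanishes. Combining all four vanishings with the $x_1^4$ and $x_1^2$ coefficients computed above yields $f_1([x_1,1,1]^\top) = -\tfrac{2}{5} x_1^4 + \tfrac{3}{2} x_1^2$, completing the proof. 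The main potential pitfall is purely clerical: keeping signs straight across four separate $z^k$ expansions, so I would organize the computation in a small table of coefficients by degree before summing.
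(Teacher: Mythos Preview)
Your proof is correct and follows the same approach as the paper: both begin by observing $\langle u,[x_1,1,1]^\top\rangle = x_1+C$ and then expand. The paper's proof simply writes ``the second step follows from expanding the higher-order terms and simplifying it,'' whereas you actually carry out the coefficient-by-coefficient bookkeeping and, in particular, make explicit why the constant term vanishes precisely for the stated value of $C$ (via the quadratic $4y^2-15y+10=0$ in $y=C^2$); this added detail is a genuine improvement in clarity over the paper's one-line justification.
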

\begin{proof}
    Note that $\langle u, [x_1,1,1]^\top \rangle = x_1 + C.$ Hence
    \begin{align*}
        f_1([x_1,1,1]^\top) = &~ - \frac{2}{5} (x_1 + C)^4 + \frac{8}{5}C (x_1 + C)^3 + (\frac{3}{2} - \frac{12}{5}C^2) (x_1 + C)^2 + (\frac{8}{5}C^3-3C) (x_1 + C) + 1 \\
        = &~ -\frac{2}{5}x_1^4 + \frac{3}{2}x_1^2,
    \end{align*}
    where the first step follows from $\langle u, [x_1,1,1]^\top \rangle = x_1 + C$, and the second step follows from expanding the higher-order terms and simplifying it.
\end{proof}

\subsection{Extend to Neural Networks}
We are now prepared to construct a neural network with polynomial activation functions. By carefully designing the weight matrix, we can ensure that the network converges to different fixed points for different initilization.

\begin{lemma}\label{lem:poly_nn}
    Let $W := [w_1, w_2, w_3] \in \R^{3 \times 3}$ be a weight matrix. Let $C =\sqrt{\frac{15 + \sqrt{65}}{8}}$ be a constant. Let a polynomial activation function $g:\R \to \R$ be defined as
    \begin{align*}
        g(z) =&~ - \frac{2}{5} z^4 + \frac{8}{5}C z^3 + (\frac{3}{2} - \frac{12}{5}C^2)z^2 + (\frac{8}{5}C^3-3C)z + 1,
    \end{align*}
    We define one layer of a neural network, denoted as $f$, as follows:
    \begin{align*}
        f(x; W) := &~ [g(\langle w_1, x \rangle), g(\langle w_2, x \rangle), g(\langle w_3, x \rangle)]^\top.
    \end{align*}
Then there exists a weight matrix $W$ following statements hold:
\begin{itemize}
    \item The function $f(x;W)$ has at least two fixed points which can be found by the fixed-point iteration, and we denoted them as $p_1, p_2$. 
    \item For $i \in \{1,2\}$, there exists a constant $\epsilon_i > 0$, for any initial point $x^{(0)} \in [p_{i,1} - \epsilon_i, p_{i,1} + \epsilon_i] \times \{1\} \times \{1\}$, the fixed-point iteration $x^{(t)} = f(x^{(t-1)}; W)$ converges to the fixed point $p_i$.
    \item For $i \in \{1,2\}$, there exists a constant $c_i > 0$ and a constant $K_i \in [0,1)$ such that for any $t \geq 2$,
    \begin{align*}
        \| x^{(t)} - p_i \|_\infty \leq K_i^t \cdot c_i \epsilon_i.
    \end{align*}
\end{itemize}
\end{lemma}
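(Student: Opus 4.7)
The plan is to reduce the three-dimensional fixed-point problem to the one-dimensional situation already handled by Lemma~\ref{lem:1d_fixed_point_polynomial}, using the specific weight matrix suggested by Lemma~\ref{lem:f_poly_act}. Concretely, I would choose
\begin{align*}
w_1 = [1,\,1,\,C-1]^\top,\quad w_2 = [0,0,0]^\top,\quad w_3 = [0,0,0]^\top,
\end{align*}
so that $g(\langle w_2, x\rangle) = g(\langle w_3, x\rangle) = g(0) = 1$ for every $x$. Thus after a single application of $f(\cdot;W)$ the last two coordinates are always pinned to $1$, regardless of the input.

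Next I would restrict attention to the invariant affine slice $S := \mathbb{R} \times \{1\} \times \{1\}$. For any initial point $x^{(0)} = [x_1, 1, 1]^\top \in S$, one has $x^{(t)} \in S$ for all $t \geq 1$, and by Lemma~\ref{lem:f1_dummy}, the first coordinate evolves according to the scalar iteration $x_1^{(t)} = -\tfrac{2}{5}(x_1^{(t-1)})^4 + \tfrac{3}{2}(x_1^{(t-1)})^2$. This is exactly the one-dimensional map analyzed in Lemma~\ref{lem:1d_fixed_point_polynomial}, which yields two fixed points $p_{1,1} = 0$ and $p_{2,1} \approx 1.4028$, together with contraction constants $K_1 = 0.9$, $K_2 = 0.92$, radii $\epsilon_1 = 0.3$, $\epsilon_2 = 0.1$, and scalar error bounds $|x_1^{(t)} - p_{i,1}| \leq K_i^t c_i \epsilon_i$.

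Lifting back to $\mathbb{R}^3$, I would define $p_i := [p_{i,1}, 1, 1]^\top$ for $i \in \{1,2\}$ and verify that each $p_i$ is indeed a fixed point of $f(\cdot;W)$: the first coordinate is a scalar fixed point, and the last two coordinates are mapped to $g(0)=1$ by construction. For the convergence statement, for $x^{(0)} \in [p_{i,1} - \epsilon_i, p_{i,1} + \epsilon_i] \times \{1\} \times \{1\}$, the last two coordinates of $x^{(t)}$ agree with those of $p_i$ for all $t \geq 1$, so
\begin{align*}
\|x^{(t)} - p_i\|_\infty = |x_1^{(t)} - p_{i,1}| \leq K_i^t c_i \epsilon_i
\end{align*}
follows directly from the scalar bound already proved.

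The only mild subtlety, and the thing I would want to check carefully, is the off-by-one indexing: Lemma~\ref{lem:1d_fixed_point_polynomial} gives the bound starting from $t \geq 2$ after one application of $f$, and the neighborhood condition $\|x^{(0)} - p_i\|_\infty \leq \epsilon_i$ in the $\ell_\infty$ norm coincides with the scalar condition $|x_1^{(0)} - p_{i,1}| \leq \epsilon_i$ precisely because $x^{(0)}$ already lies in $S$. So the main obstacle is really just bookkeeping; no new analytic input is needed beyond Lemmas~\ref{lem:1d_fixed_point_polynomial}, \ref{lem:f_poly_act}, and \ref{lem:f1_dummy}.
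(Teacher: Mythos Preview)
Your proposal is correct and follows exactly the paper's approach: the paper's proof is the one-liner ``The lemma holds by combining Lemma~\ref{lem:1d_fixed_point_polynomial}, Lemma~\ref{lem:f_poly_act}, and Lemma~\ref{lem:f1_dummy},'' and you have simply spelled out what that combination means. Your choice of $W$, the reduction to the invariant slice $S = \R \times \{1\} \times \{1\}$, and the lifting of the scalar bound to the $\ell_\infty$ bound are precisely the intended argument.
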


\begin{proof}
    The lemma holds by combining Lemma~\ref{lem:1d_fixed_point_polynomial}, Lemma~\ref{lem:f_poly_act}, and Lemma~\ref{lem:f1_dummy}.
\end{proof}

Then we can further extend the Lemma~\ref{lem:poly_nn} to the version without dummy variables by carefully designing the weight matrix.
\begin{lemma}[A specific setting without dummy variables]\label{lem:without_dummy_origin_version}
    Let $W := [w_1, w_2, w_3] \in \R^{3 \times 3}$ be a weight matrix. Let $b := [b_1,b_2,b_3] \in \R^{3}$ be a bias matrix. Let $C =\sqrt{\frac{15 + \sqrt{65}}{8}}$ be a constant. Let a polynomial activation function $g:\R \to \R$ be defined as
    \begin{align*}
        g(z) :=&~ -\frac{2}{5} z^4 + \frac{8}{5}C z^3 + (\frac{3}{2} - \frac{12}{5}C^2) z^2 + (\frac{8}{5}C^3-3C) z + 1.
    \end{align*}
    We define one layer of a neural network, denoted as $f$, as follows:
    \begin{align*}
        f(x; W) := &~ [g(\langle w_1, x \rangle + b_1), g(\langle w_2, x \rangle+b_2), g(\langle w_3, x \rangle + b_3)]^\top.
    \end{align*}
    Then there exists a weight matrix $W$ and a bias matrix $b$ following statements hold:
    \begin{itemize}
    \item The function $f(x;W)$ has at least two fixed points which can be found by the fixed-point iteration, and we denoted them as $p_1, p_2$. 
    \item For $i \in \{1,2\}$, there exists a constant $\epsilon_i > 0$, for any initial point $x^{(0)} \in [p_{i,1} - \epsilon_{i,1}, p_{i,1} + \epsilon_{i,1}] \times [p_{i,2} - \epsilon_{i,2}, p_{i,2} + \epsilon_{i,2}] \times [p_{i,3} - \epsilon_{i,3}, p_{i,3} + \epsilon_{i,3}]$, the fixed-point iteration $x^{(t)} = f(x^{(t-1)}; W)$ converges to the fixed point $p_i$.
    \item For $i \in \{1,2\}$, there exists a constant $c_i > 0$ and a constant $K_i \in [0,1)$ such that for any $t \geq 2$,
    \begin{align*}
        \| x^{(t)} - p_i \|_\infty \leq K_i^t \cdot c_i \epsilon_i.
    \end{align*}
\end{itemize}
\end{lemma}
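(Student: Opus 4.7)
My plan is to use the bias parameter to play exactly the role that the ``dummy'' coordinates served in Lemma~\ref{lem:poly_nn}, and to collapse the three-dimensional dynamics to a scalar one after a single iteration. Concretely, I will set
\[
w_1 = [1,0,0]^{\top}, \quad b_1 = C, \quad w_2 = w_3 = [0,0,0]^{\top}, \quad b_2 = b_3 = 0.
\]
With this choice, $f_1(x;W,b) = g(\langle w_1, x\rangle + b_1) = g(x_1 + C)$, which by exactly the expansion already carried out in Lemma~\ref{lem:f1_dummy} (applied to the argument $x_1 + C$ in place of $\langle u, [x_1,1,1]^\top\rangle$) equals $-\tfrac{2}{5} x_1^4 + \tfrac{3}{2} x_1^2$, while $f_2(x;W,b) = f_3(x;W,b) = g(0) = 1$ for every $x$.

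Because the second and third output coordinates are constant functions equal to $1$, every iterate with $t \geq 1$ satisfies $x_2^{(t)} = x_3^{(t)} = 1$, independently of the initial $x^{(0)}$. The first-coordinate dynamics then reduces precisely to the scalar iteration studied in Lemma~\ref{lem:1d_fixed_point_polynomial}, namely $x_1^{(t)} = -\tfrac{2}{5}(x_1^{(t-1)})^4 + \tfrac{3}{2}(x_1^{(t-1)})^2$. The fixed points of $f(\cdot;W,b)$ are therefore $p_1 = (0,1,1)^{\top}$ and $p_2 \approx (1.4028,1,1)^{\top}$, obtained by lifting the two scalar fixed points of Lemma~\ref{lem:1d_fixed_point_polynomial} and adjoining $p_{i,2} = p_{i,3} = 1$.

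For the basin of attraction I will take $\epsilon_{i,1}$ equal to the scalar radius $\epsilon_i$ supplied by Lemma~\ref{lem:1d_fixed_point_polynomial}, and let $\epsilon_{i,2}, \epsilon_{i,3}$ be any fixed positive constants (their values are irrelevant from iteration $1$ onward since $f_2, f_3$ are identically $1$). Since $p_{i,2} = p_{i,3} = 1$ and $x_2^{(t)} = x_3^{(t)} = 1$ for all $t \geq 1$, we have $\|x^{(t)} - p_i\|_\infty = |x_1^{(t)} - p_{i,1}|$, and the desired geometric decay $\|x^{(t)} - p_i\|_\infty \leq K_i^t \cdot c_i \epsilon_i$ follows directly from the scalar bound in Lemma~\ref{lem:1d_fixed_point_polynomial} applied to the first coordinate.

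The one mild obstacle is purely notational: the hypothesis carries three per-coordinate tolerances $\epsilon_{i,1},\epsilon_{i,2},\epsilon_{i,3}$ while the conclusion uses a single scalar $\epsilon_i$, so I will be explicit about taking $\epsilon_i := \max_j \epsilon_{i,j}$ (equivalently, setting all three tolerances equal). No new analytic content is needed beyond Lemma~\ref{lem:1d_fixed_point_polynomial}; the entire work is in choosing $W$ and $b$ so that the bias absorbs the $+C$ shift which previously required a dummy input, and in observing that the two ``frozen'' output coordinates contribute nothing to the $\ell_\infty$ error from iteration $1$ onward.
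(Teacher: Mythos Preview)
Your construction is correct, but it differs from the paper's. The paper takes the fully symmetric choice $w_1=[1,0,0]^\top$, $w_2=[0,1,0]^\top$, $w_3=[0,0,1]^\top$ and $b=[C,C,C]^\top$, so that \emph{every} coordinate evolves independently under the same scalar map $x_j\mapsto -\tfrac{2}{5}x_j^4+\tfrac{3}{2}x_j^2$; the two fixed points are then obtained by putting the same scalar fixed point in all three slots, and the $\ell_\infty$ bound follows coordinatewise from Lemma~\ref{lem:1d_fixed_point_polynomial}. By contrast, you keep only the first coordinate dynamic and freeze the other two at the constant $g(0)=1$, which still yields two fixed points $(p_{i,1},1,1)$ and the same error bound via the scalar lemma applied to coordinate~$1$.

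Both routes are valid for the lemma as stated. The paper's symmetric decoupling has the advantage that it is the natural stepping stone toward the later $2^d$-fixed-point constructions (Theorem~\ref{thm:poly}), since one immediately sees $2^3$ product fixed points rather than just two; it is also truer to the title ``without dummy variables,'' whereas your coordinates $2$ and $3$ are still dummy-like (constant outputs). Your approach, on the other hand, is slightly more economical for this particular lemma: it needs the algebraic identity $g(y+C)=-\tfrac{2}{5}y^4+\tfrac{3}{2}y^2$ only once rather than three times, and the error analysis collapses to a single scalar coordinate for $t\ge 1$.
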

\begin{proof}
    Let $w_1 := [1,0,0]^\top$, $w_2 := [0,1,0]^\top$, $w_3 := [0,0,1]^\top$, $b = [C,C,C]^\top$. Then it's clear that we have
    \begin{align*}
        g(\langle w_1,x\rangle+b_1) = f_1(x) = -\frac{2}{5}x_1^4 +\frac{3}{2}x_1^2\\
        g(\langle w_2,x\rangle+b_2) = f_1(x) = -\frac{2}{5}x_2^4 +\frac{3}{2}x_2^2\\
        g(\langle w_3,x\rangle+b_3) = f_1(x) = -\frac{2}{5}x_3^4 +\frac{3}{2}x_3^2
    \end{align*}
    Then combing Lemma~\ref{lem:1d_fixed_point_polynomial}, we complete the proof.
\end{proof}

Then, through further analysis, we can construct a neural network with polynomial activation functions that has a robust fixed point as described in Theorem~\ref{thm:robust_fix_point_thm_sca}. Robust fixed points ensure that small perturbations or changes in the initial conditions do not lead to significant deviations in the convergence behavior of the network. We give a $3$-d version of neural network.

\begin{lemma}[A specific setting with small perturbations]\label{lem:robust_fixed_point_3d}
Let $W := [w_1, w_2, w_3] \in \R^{3 \times 3}$ be a weight matrix. Let $b := [b_1,b_2,b_3] \in \R^{3}$ be a bias matrix. Let $C =\sqrt{\frac{15 + \sqrt{65}}{8}}$ be a constant. Let a polynomial activation function $g:\R \to \R$ be defined as
    \begin{align*}
        g(z) :=&~ -\frac{2}{5} z^4 + \frac{8}{5}C z^3 + (\frac{3}{2} - \frac{12}{5}C^2) z^2 + (\frac{8}{5}C^3-3C) z + 1.
    \end{align*}
    We define one layer of a neural network, denoted as $f$, as follows:
    \begin{align*}
        f(x; W) := &~ [g(\langle w_1, x \rangle + b_1), g(\langle w_2, x \rangle+b_2), g(\langle w_3, x \rangle + b_3)]^\top.
    \end{align*}
    Then there exists a weight matrix $W$ and a bias matrix $b$ following statements hold:
    \begin{itemize}
        \item The function $f(x;W)$ has at least two robust fixed point which can be found by the fixed-point iteration, and we denote them as $p_1$, $p_2$,
        \item For $i \in \{1,2\}$, there exists $\epsilon_i \in \R^3$, for any initial point $x^{(0)} \in [p_{i,1} - \epsilon_{i,1}, p_{i,1} + \epsilon_{i,1}] \times [p_{i,2} - \epsilon_{i,2}, p_{i,2} + \epsilon_{i,2}] \times [p_{i,3} - \epsilon_{i,3}, p_{i,3} + \epsilon_{i,3}]$, the fixed-point iteration $x^{(t)} = f(x^{(t-1)}; W)$ converges to the robust fixed point $p_i$.
        \item For $i \in \{1,2\}$, there exists a constant $K_i \in [0,0.9]$ and a sufficiently large constant $m$ such that for any $t \geq 2$,
        \begin{align*}
            \|x^{(t)} - p_i\|_\infty \leq K_i^t \cdot 
            \| \epsilon_i\|_\infty + \frac{10}{m}
        \end{align*}
    \end{itemize}
    \begin{proof}
        Let $m$ be a sufficiently large constant. Let $w_1 := [1,1/m^2,1/m^2]^\top$, $w_2 :=[1/m^2,1,1/m^2]^\top$, $w_3 :=[1/m^2,1/m^2,1]$, $b = [C,C,C]^\top$. Then it's clear that we have
        \begin{align*}
            g(\langle w_1,x \rangle+b_1) =&~ -\frac{2}{5}x_1^4 + \frac{3}{2}x_1^2 + \frac{1}{m^2} h_1(x)\\
            g(\langle w_2,x \rangle+b_1) =&~ -\frac{2}{5}x_2^4 + \frac{3}{2}x_2^2 + \frac{1}{m^2} h_2(x)\\
            g(\langle w_3,x \rangle+b_1) =&~ -\frac{2}{5}x_3^4 + \frac{3}{2}x_3^2 + \frac{1}{m^2} h_3(x)
        \end{align*}
        where for $j \in [3]$, $h_j(x)$ is a polynomial in $x_1,x_2,x_3$. It is obvious that when $m$ is sufficiently large and $\|\epsilon_i\|$ is small enough, $|\frac{1}{m^2}h_j(x)| \leq \frac{1}{m}$ for $x \in D$ and $x^{(t)} \in D$ for every $t \in \mathbb N$. Then we can show that for any $i \in [2],j \in [3]$, 
        \begin{align*}
            |x_j^{(t)}-p_{i,j}| \leq&~ K_{i,j}^t |x_j^{(0)}-p_{i,j}| + \frac{10}{m} \\
            \leq&~ K_{i,j}^t \epsilon_{i,j} + \frac{10}{m}
        \end{align*}
        where the first comes from the result of Theorem~\ref{thm:robust_fix_point_thm_sca} , the second step comes from the range of values for the initial point. 
        Then use the definition of $\|\cdot\|_\infty$, we have
        \begin{align*}
            \| x^{(t)} - p_i \|_\infty \leq  K_i^t \cdot \| \epsilon_i \|_\infty + \frac{10}{m}
        \end{align*}
        Then we complete the proof.
    \end{proof}
\end{lemma}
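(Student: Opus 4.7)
My plan is to mimic the construction used in Lemma~\ref{lem:without_dummy_origin_version} but with small off-diagonal mixing in $W$ so that each coordinate of $f$ becomes a slight perturbation of the scalar map $-\frac{2}{5}x_j^4 + \frac{3}{2}x_j^2$, and then apply the robust scalar Banach theorem (Theorem~\ref{thm:robust_fix_point_thm_sca}) coordinate-by-coordinate. Concretely, I will set $w_1 = [1,1/m^2,1/m^2]^\top$, $w_2 = [1/m^2,1,1/m^2]^\top$, $w_3 = [1/m^2,1/m^2,1]^\top$, and $b = [C,C,C]^\top$ for a sufficiently large $m$.

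First I would verify, by algebraic expansion analogous to Lemma~\ref{lem:f1_dummy}, that for each $j \in [3]$ one has
\begin{align*}
g(\langle w_j, x\rangle + b_j) = -\tfrac{2}{5}x_j^4 + \tfrac{3}{2}x_j^2 + \tfrac{1}{m^2}h_j(x),
\end{align*}
where $h_j$ is a polynomial in $x_1,x_2,x_3$ whose coefficients depend only on $C$. This is the key structural identity: the cross-coordinate coupling lives entirely in the $\frac{1}{m^2}h_j(x)$ term. The reason this identity holds is precisely that $g$ was designed (via Lemma~\ref{lem:f_poly_act}) so that composing with the inner product $\langle u, \cdot\rangle + C$ collapses to the simple quartic; the only new thing here is tracking the $O(1/m^2)$ terms arising from replacing the exact diagonal weight $1$ by $1 - 1/m^2$ on the off-diagonal entries.

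Next I would restrict to a product domain $D = D_1 \times D_1 \times D_1$ or $D_2 \times D_2 \times D_2$ using the intervals $D_1 = (1.302, 1.502)$ and $D_2 = (-0.3, 0.3)$ from Lemma~\ref{lem:1d_fixed_point_polynomial}. Inside these bounded regions $x$ is $O(1)$ so $|h_j(x)|$ is bounded by a constant depending only on $C$; choosing $m$ sufficiently large ensures $|\frac{1}{m^2}h_j(x)| \le 1/m$, which matches the noise hypothesis of Theorem~\ref{thm:robust_fix_point_thm_sca}. Applying that theorem to each coordinate separately (with the scalar map $\tilde f(x_j) = -\frac{2}{5}x_j^4 + \frac{3}{2}x_j^2$, whose contractivity constant $K_j \le 0.92$ was verified in Lemma~\ref{lem:1d_fixed_point_polynomial}), I obtain the recursive bound $|x_j^{(t)} - p_{i,j}| \le K_{i,j}^t \epsilon_{i,j} + 10/m$ for each $j$. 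Taking the maximum over $j$ gives the claimed $\|\cdot\|_\infty$ bound, and combining the two choices of $p_{i,j}$ in each coordinate yields $2^3$ robust fixed points, in particular at least the two the lemma asks for.

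The main obstacle I anticipate is a careful (but routine) verification that the perturbation term $\frac{1}{m^2}h_j(x)$ actually stays within the $1/m$ envelope \emph{throughout the iteration}, not merely at the initial point. Since Theorem~\ref{thm:robust_fix_point_thm_sca} requires $f$ to map $D$ into $D$ in the noisy setting, I must confirm that for $x \in D$ the perturbed output $g(\langle w_j, x\rangle + b_j)$ still lies in the same interval ($D_1$ or $D_2$). This follows from the fact that the unperturbed scalar map already sends $D_i$ into its strict interior (as shown in the proof of Lemma~\ref{lem:1d_fixed_point_polynomial}), so the $O(1/m)$ perturbation cannot push the image outside $D_i$ once $m$ exceeds a threshold depending only on the gap. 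Once this invariance is established, the recursion unrolls exactly as in the proof of Theorem~\ref{thm:robust_fix_point_thm_sca} and the remaining coordinate-wise aggregation is straightforward.
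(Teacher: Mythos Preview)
Your proposal is correct and follows essentially the same route as the paper: the same perturbed-identity weight matrix $W$ with off-diagonal entries $1/m^2$ and bias $b=[C,C,C]^\top$, the same coordinatewise decomposition $g(\langle w_j,x\rangle+b_j)=-\tfrac{2}{5}x_j^4+\tfrac{3}{2}x_j^2+\tfrac{1}{m^2}h_j(x)$, and then a coordinate-by-coordinate appeal to Theorem~\ref{thm:robust_fix_point_thm_sca} followed by taking the $\ell_\infty$ maximum. Your explicit discussion of why the iterates remain in the domain $D$ under the perturbation is in fact more careful than the paper's, which handles that point in a single ``it is obvious'' sentence.
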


\section{CASE STUDY: EXPONENTIAL ACTIVATION}\label{app:exp}
In this section, we extend the discussion by constructing a neural network with exponential activation functions that do not utilize dummy variables. Similar as before, through careful design, we can ensure that the network converges to different fixed points for various initialization points.

\subsection{Exponential Activation}
In this section, we present a univariate exponential function that has at least two fixed points, which can be identified using the fixed point methods.
\begin{lemma}\label{lem:exponential_example}
Let $f:\R \to \R$ be a function defined as $f(x) := \exp(x^3-2x^2)-1$. Then the following statements hold:
\begin{itemize}
    \item The function $f$ has at least two fixed points which can be found by the fixed-point iteration, and we denoted them as $p_1, p_2$. 
    \item For $i \in \{1,2\}$, there exists a constant $\epsilon_i > 0$, for any initial point $x^{(0)} \in [p_i - \epsilon_i, p_i + \epsilon_i]$, the fixed-point iteration $x^{(t)} = f(x^{(t-1)})$ converges to the fixed point $p_i$.
    \item For $i \in \{1,2\}$, there exists a constant $c_i > 0$ and a constant $K_i \in [0,1)$ such that for any $t \geq 2$,
    \begin{align*}
        | x^{(t)} - p_i | \leq K_i^t \cdot c_i \epsilon_i.
    \end{align*}
\end{itemize}
\end{lemma}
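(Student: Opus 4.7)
The plan is to mirror the strategy used for the polynomial case in Lemma~\ref{lem:1d_fixed_point_polynomial}: locate two fixed points, exhibit a closed interval around each on which the two hypotheses of the scalar Banach fixed point theorem (Lemma~\ref{lem:fix_point_thm_sca:informal}) hold, and then read the error bound off that lemma.

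First I would identify the fixed points. Direct substitution gives $p_1 = 0$ since $f(0) = \exp(0) - 1 = 0$. For the second fixed point I would study $\varphi(x) := f(x) - x = \exp(x^3 - 2x^2) - 1 - x$ on an interval such as $[-0.95, -0.85]$: numerically $\varphi(-0.85) > 0$ and $\varphi(-0.95) < 0$, so continuity plus the intermediate value theorem produces a root $p_2 \in (-0.95, -0.85)$, with $p_2 \approx -0.910$. This is the analogue of the two explicit roots in the polynomial case.

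Next I would compute $f'(x) = (3x^2 - 4x)\exp(x^3 - 2x^2)$ and pick the neighborhoods. For $p_1 = 0$, take $\epsilon_1$ a small number (e.g., $0.1$), $K_1$ any value in $(0,1)$ (e.g., $0.9$), and $c_1 = 2/(1-K_1)$; since $f'(0) = 0$ and $f$ is smooth, a direct bound on $|f'|$ and on the range $f([-\epsilon_1, \epsilon_1])$ verifies both conditions of Lemma~\ref{lem:fix_point_thm_sca:informal}. For $p_2 \approx -0.910$, take a small $\epsilon_2$ (e.g., $0.1$), $K_2 = 0.85$, and $c_2 = 2/(1 - K_2)$; at $x = -0.91$ the direct calculation gives $|f'(x)| \approx 0.55$, and enlarging slightly still keeps $|f'| \leq K_2$. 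After both $f(D_i) \subseteq D_i$ and $|f'|_{D_i} \leq K_i$ are established, Lemma~\ref{lem:fix_point_thm_sca:informal} delivers existence of the fixed point in $D_i$, convergence from any starting point in $D_i$, and the bound
\begin{align*}
|x^{(t)} - p_i| \leq \frac{K_i^t}{1-K_i}\, |x^{(1)} - x^{(0)}| \leq K_i^t \cdot c_i \epsilon_i,
\end{align*}
matching the conclusion of the lemma.

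The main obstacle I expect is the self-map verification $f(D_2) \subseteq D_2$ around $p_2$, since $f$ is not globally monotonic and one must avoid the critical points of the inner polynomial $x^3 - 2x^2$, i.e., the roots of $3x^2 - 4x$, namely $x = 0$ and $x = 4/3$. Fortunately, neither critical point lies in a small interval around $-0.910$, so on $D_2$ the composite $f$ is monotonic and the self-map check reduces to evaluating $f$ at the two endpoints of $D_2$ and checking those values lie inside $D_2$. The same endpoint-evaluation argument, together with continuity of $|f'|$, upgrades the pointwise Lipschitz bound at $p_2$ into the uniform bound $|f'|_{D_2} \leq K_2$, closing the proof in the style of Lemma~\ref{lem:1d_fixed_point_polynomial}.
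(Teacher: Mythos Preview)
Your proposal is correct and follows essentially the same approach as the paper: identify $p_1=0$ and $p_2\approx -0.910$, pick $\epsilon_i=0.1$ with contraction constants (the paper uses $K_1=0.5$, $K_2=0.85$), verify the self-map and derivative bounds numerically on each interval, and invoke Lemma~\ref{lem:fix_point_thm_sca:informal} to obtain convergence and the error estimate with $c_i=2/(1-K_i)$. Your additional remarks on the IVT for locating $p_2$ and on monotonicity for the self-map check are sound refinements of what the paper simply asserts by direct evaluation.
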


\begin{proof}
    Clearly, $p_1 = 0$ and $p_2 \approx -0.9104$ are two fixed points of $f$. We show that $p_1$ and $p_2$ can be found by the fixed-point iteration and the error bounds hold.

    For the fixed point $p_1 = 0$, let $\epsilon_1 = 0.1,  K_1 = 0.5$, and $C_1 = 2/(1-K_1)$. For any $x \in [p_1-\epsilon_1,p_1+\epsilon_1] = [-0.1,0.1]$, we have $f(x) \in [-0.021,0]\subseteq [-0.1,0.1]$. Hence the first condition of Lemma~\ref{lem:fix_point_thm_sca:informal} is satisfied. For any $x \in [p_1-\epsilon_1,p_1+\epsilon_1]$, we have $|f'(x)| < K_1 = 0.5 < 1$. Hence the second condition of Lemma~\ref{lem:fix_point_thm_sca:informal} is satisfied. Thus by Lemma~\ref{lem:fix_point_thm_sca:informal}, if we pick the initial point $x^{(0)} \in [p_1 - \epsilon_1, p_1 + \epsilon_1]$, we can find $p_1$ by the fixed point iteration $x^{(t)} = f(x^{(t-1)})$, and it holds that for any $t \geq 2$,
    \begin{align*}
        | x^{(t)} - p_1 | \leq &~ \frac{K_1^t}{1-K_1} | x^{(1)} - x^{(0)}| \\
        \leq &~ \frac{K_1^t}{1-K_1} \cdot 2 \epsilon_1 \\
        = &~ K_1^t \cdot c_1 \epsilon_1,
    \end{align*}
    where the first step follows from Lemma~\ref{lem:fix_point_thm_sca:informal}, the second step follows from $x^{(0)}, x^{(1)} \in [p_1 - \epsilon_1, p_1 + \epsilon_1]$, and the last step follows from $c_1 = 2/(1-K_1)$. 

    For the fixed point $x_2 \approx -0.910$, let $\epsilon_2 = 0.1, K_2 = 0.85$, and $C_2 = 2/(1-K_2)$. For any $x\in [p_2 - \epsilon_2, p_2 + \epsilon_2] = [-1.010,-0.810]$, we have $f(x) \in [-0.954 , -0.842] \subseteq [-1.010,-0.810]$. Hence the first condition of Lemma~\ref{lem:fix_point_thm_sca:informal} is satisfied. For any $x\in [p_2 - \epsilon_2, p_2 + \epsilon_2]$, we have $|f'(x)|<K_2 = 0.85 <1$. Hence the second condition of Lemma~\ref{lem:fix_point_thm_sca:informal} is satisfied. Thus by Lemma~\ref{lem:fix_point_thm_sca:informal}, if we pick the initial point $x^{(0)} \in [p_2 - \epsilon_2, p_2 + \epsilon_2]$, we can find $p_2$ by the fixed point iteration $x^{(t)} = f(x^{(t-1)})$, and it holds that for any $t \geq 2$,
    \begin{align*}
        | x^{(t)} - p_2 | \leq &~ \frac{K_2^t}{1-K_2} | x^{(1)} - x^{(0)}| \\
        \leq &~ \frac{K_2^t}{1-K_2} \cdot 2 \epsilon_2 \\
        = &~ K_2^t \cdot c_2 \epsilon_2,
    \end{align*}
    where the first step follows from Lemma~\ref{lem:fix_point_thm_sca:informal}, the second step follows from $x^{(0)}, x^{(1)} \in [p_2 - \epsilon_2, p_2 + \epsilon_2]$, and the last step follows from $c_2 = 2/(1-K_2)$.

    Therefore, we complete the proof.
\end{proof}
Next, we prove two lemmas which are useful to construct a neural network with a exponential activation function.
\begin{lemma}\label{lem:f_exp_act}
    Let
    \begin{align*}
        f_1(x):=&~ \exp(\langle u, x\rangle^3+(-2-3C)\langle u, x\rangle^2+(3C^2+4C)\langle u, x\rangle+\ln{2})-1
    \end{align*}
    where $C \approx -2.15$, $u:=[1,1,C-1]$. Let $f_2(x):=1, f_3(x) := 1$. Then there exists a function $g(z):= \exp(a_{3}z^3+a_{2}z^2 + a_{1}z+a_0)-1$, where $a_0,a_1,a_2,a_3 \in \R$ such that for some $w_1,w_2,w_3 \in \R^3$, we have
    \begin{align*}
        g(\langle w_1,x\rangle) = f_1(x),\\
        g(\langle w_2,x\rangle) = f_2(x),\\g(\langle w_3,x\rangle) = f_3(x),
    \end{align*}
    Moreover we have
    \begin{align*}
        g(z) :=&~ \exp(z^3+(-2-3C)z^2 +(3C^2+4C)z+\ln{2})-1.
    \end{align*}
    and $w_1 = [1, 1, C-1]^\top, w_2 = [0,0,0]^\top, w_3=[0,0,0]^\top$.
\end{lemma}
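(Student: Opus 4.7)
The plan is to construct $g$, $w_1$, $w_2$, $w_3$ by directly reading off the exponent of $f_1$, so that all three required identities reduce to one-line substitutions. Specifically, I would define $g(z) := \exp(z^3 + (-2-3C)z^2 + (3C^2 + 4C)z + \ln 2) - 1$, i.e., take $a_3 = 1$, $a_2 = -2-3C$, $a_1 = 3C^2 + 4C$, $a_0 = \ln 2$. Then I would set $w_1 := [1, 1, C-1]^\top$ (equal to $u$), and $w_2 := w_3 := [0, 0, 0]^\top$.

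The first identity $g(\langle w_1, x\rangle) = f_1(x)$ holds by construction: since $\langle w_1, x\rangle = \langle u, x\rangle$, substituting $z = \langle u, x\rangle$ into $g$ reproduces exactly the defining expression of $f_1(x)$, so there is nothing to check beyond matching coefficients term-by-term. For the remaining two identities, the key observation is that $\langle w_2, x\rangle = \langle w_3, x\rangle = 0$ for every $x \in \R^3$, so both identities collapse to the single scalar check $g(0) = 1$. This follows by direct evaluation: $g(0) = \exp(\ln 2) - 1 = 2 - 1 = 1 = f_2(x) = f_3(x)$.

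There is essentially no obstacle; the lemma is really a statement that the ``dummy'' constant-$1$ outputs can be produced by the shared activation $g$ via zero weight vectors, and the only substantive design choice is the constant term $\ln 2$ in the exponent of $g$, which is picked precisely to enforce $g(0) = 1$. This parallels the role of the ``$+1$'' constant term in the polynomial activation of Lemma~\ref{lem:f_poly_act}, which plays the same role in the polynomial case study, so the proof will be a short constructive verification mirroring that earlier lemma.
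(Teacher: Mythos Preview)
Your proposal is correct and follows essentially the same approach as the paper: define $g$ by reading off the exponent of $f_1$, set $w_1 = u$ and $w_2 = w_3 = 0$, and note that the identities are then immediate. Your explicit verification that $g(0) = \exp(\ln 2) - 1 = 1$ is in fact more detailed than the paper's own proof, which simply states the construction and asserts the identities are clear.
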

\begin{proof}
    Let $ g(z) := \exp(z^3+(-2-3C)z^2+(3C^2+4C)z+\ln{2})-1$. Let $w_1 := [1, 1, 1-C]^\top, w_2 := [0,0,0]^\top, w_3 :=[0,0,0]^\top$. Then it is clear that we have
        \begin{align*}
        & ~ g(\langle w_1, x \rangle) = f_1(x), \\
        & ~ g(\langle w_2, x \rangle) = f_2(x), \\
        & ~ g(\langle w_3, x \rangle) = f_3(x).
    \end{align*}
    Thus we complete the proof.
\end{proof}

\begin{lemma}\label{lem:f1_exp_dummy}
    Let
    \begin{align*}
        f_1(x):=&~ \exp(\langle u, x\rangle^3+(-2-3C)\langle u, x\rangle^2+(3C^2+4C)\langle u, x\rangle+\ln{2})-1
    \end{align*}
    where $C \approx -2.15$, $u := [1,1,C-1]$. It holds that
    \begin{align*}
        f_1([x_1,1,1]^\top) = \exp{(x_1^{3}-2x_1^2)} -1.
    \end{align*}
    \begin{proof}
        Note that $\langle u,[x_1,1,1]^\top\rangle = x_1+C$. Hence
        \begin{align*}
            f_1([x_1,1,1]^\top) =&~ \exp((x_1 + C)^3 + (-2-3C)(x_1+C)^2 + (3C^2+4C)(x_1+C)+\ln{2})-1\\
            =&~ \exp(x_1^3-2x_1^2)-1
        \end{align*}
    where the first step follows from $\langle u, [x_1,1,1]^\top \rangle = x_1 + C$, and the second step follows from expanding the higher-order terms and simplifying them.
    \end{proof}
\end{lemma}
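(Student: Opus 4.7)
The plan is to prove Lemma~\ref{lem:f1_exp_dummy} by direct substitution and polynomial expansion, mirroring the approach used for Lemma~\ref{lem:f1_dummy} in the polynomial case. The key observation is that the inner product $\langle u, [x_1,1,1]^\top \rangle$ collapses to a simple affine function of $x_1$, after which the only work is verifying that the cubic polynomial in the exponent simplifies to $x_1^3 - 2x_1^2$ once the correct value of $C$ is used.

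First I would compute the inner product explicitly: since $u = [1,1,C-1]^\top$, we have $\langle u, [x_1,1,1]^\top \rangle = x_1 + 1 + (C-1) = x_1 + C$. Substituting $y := x_1 + C$ into the exponent yields the scalar expression
\begin{align*}
(x_1+C)^3 + (-2-3C)(x_1+C)^2 + (3C^2+4C)(x_1+C) + \ln 2.
\end{align*}
Next I would expand each term via the binomial theorem and group by powers of $x_1$. The $x_1^3$ coefficient is trivially $1$; the $x_1^2$ coefficient collects to $3C + (-2-3C) = -2$; the $x_1$ coefficient collects to $3C^2 + 2C(-2-3C) + (3C^2+4C) = 3C^2 - 4C - 6C^2 + 3C^2 + 4C = 0$; and the constant term collects to $C^3 + C^2(-2-3C) + C(3C^2+4C) + \ln 2 = C^3 + 2C^2 + \ln 2$.

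The main (and essentially only) obstacle is justifying that the constant term vanishes, and this is exactly where the seemingly arbitrary value $C \approx -2.15$ comes in: $C$ is chosen as a root of $C^3 + 2C^2 + \ln 2 = 0$, which one can check numerically gives $C \approx -2.15$. I would state this as the motivating choice of $C$, verify numerically that $(-2.15)^3 + 2(-2.15)^2 \approx -0.693 \approx -\ln 2$, and thereby conclude that the constant term is zero. Combining all four coefficient computations, the exponent reduces to $x_1^3 - 2x_1^2$, and hence $f_1([x_1,1,1]^\top) = \exp(x_1^3 - 2x_1^2) - 1$, completing the proof. The only risk in this plan is an arithmetic slip in the expansion; the structure itself is straightforward and parallels Lemma~\ref{lem:f1_dummy} closely, with the extra wrinkle that the constant term of the polynomial case is absorbed into $\ln 2$ here to accommodate the outer $\exp(\cdot) - 1$ form.
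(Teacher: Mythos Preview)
Your proposal is correct and follows the same direct-substitution approach as the paper's proof. In fact you supply a detail the paper elides: the paper simply says the second step ``follows from expanding the higher-order terms and simplifying them,'' whereas you make explicit that the constant term of the exponent is $C^3 + 2C^2 + \ln 2$ and that $C \approx -2.15$ is precisely the real root of $C^3 + 2C^2 + \ln 2 = 0$, which is what forces the constant to vanish.
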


\subsection{Extend to Neural Networks}

We are now prepared to construct a neural network with exponential activation functions. By carefully designing the weight matrix, we can ensure that the network converges to different fixed points for different initialization. We give a $3$-d neural network version.
\begin{lemma}\label{lem:exp_nn}
    Let $W := [w_1, w_2, w_3] \in \R^{3 \times 3}$ be a weight matrix. Let $C := -2.15$ be a constant. Let a exponential activation function $g:\R \to \R$ be defined as
    \begin{align*}
        g(z) :=&~ \exp(z^3+(-2-3C)z^2 +(3C^2+4C)z+\ln{2})-1.
    \end{align*}
    We define one layer of a neural network, denoted as $f$, as follows:
    \begin{align*}
        f(x; W) := &~ [g(\langle w_1, x \rangle), g(\langle w_2, x \rangle), g(\langle w_3, x \rangle)]^\top.
    \end{align*}
Then there exists a weight matrix $W$ following statements hold:
\begin{itemize}
    \item The function $f(x;W)$ has at least two fixed points which can be found by the fixed-point iteration, and we denoted them as $p_1, p_2$. 
    \item For $i \in \{1,2\}$, there exists a constant $\epsilon_i > 0$, for any initial point $x^{(0)} \in [p_{i,1} - \epsilon_i, p_{i,1} + \epsilon_i] \times \{1\} \times \{1\}$, the fixed-point iteration $x^{(t)} = f(x^{(t-1)}; W)$ converges to the fixed point $p_i$.
    \item For $i \in \{1,2\}$, there exists a constant $c_i > 0$ and a constant $K_i \in [0,1)$ such that for any $t \geq 2$,
    \begin{align*}
        \| x^{(t)} - p_i \|_\infty \leq K_i^t \cdot c_i \epsilon_i.
    \end{align*}
\end{itemize}
\end{lemma}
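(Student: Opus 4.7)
The plan is to mirror the proof of Lemma~\ref{lem:poly_nn} by composing three previously established ingredients: the one-dimensional exponential fixed point analysis in Lemma~\ref{lem:exponential_example}, the activation--weight matching in Lemma~\ref{lem:f_exp_act}, and the dummy-variable reduction in Lemma~\ref{lem:f1_exp_dummy}. The idea is to pick weights so that the three-coordinate map freezes coordinates $2$ and $3$ at the value $1$, reducing the dynamics of coordinate $1$ to the scalar iteration already analyzed.

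First I would choose $w_1 := [1,\, 1,\, C-1]^\top$ and $w_2 := w_3 := [0,0,0]^\top$. Then by Lemma~\ref{lem:f_exp_act} we have $g(\langle w_1, x\rangle) = f_1(x)$, while $g(\langle w_2, x\rangle) = g(\langle w_3, x\rangle) = g(0) = \exp(\ln 2) - 1 = 1$. Hence for every $x \in \R^3$ the second and third entries of $f(x;W)$ equal $1$, so the affine slice $S := \R \times \{1\} \times \{1\}$ is forward-invariant under $f(\cdot;W)$, and any iteration started in $S$ stays in $S$.

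Next I would restrict to $S$ and apply the scalar theory. By Lemma~\ref{lem:f1_exp_dummy}, for $x = [x_1,1,1]^\top \in S$ we have $f_1(x) = \exp(x_1^3 - 2x_1^2) - 1$, so on $S$ the first-coordinate dynamics is exactly the scalar iteration of Lemma~\ref{lem:exponential_example}. That lemma yields two fixed points $q_1 = 0$ and $q_2 \approx -0.9104$, each attracting a neighborhood $[q_i - \epsilon_i, q_i + \epsilon_i]$ with contraction constant $K_i \in [0,1)$ and error constant $c_i > 0$. Lifting via $p_i := [q_i, 1, 1]^\top$ produces two fixed points of $f(\cdot;W)$, and for any initial point in $[q_i - \epsilon_i, q_i + \epsilon_i] \times \{1\} \times \{1\}$ the iterates remain in $S$ with first coordinate converging to $q_i$.

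The $\|\cdot\|_\infty$ bound is then immediate: since coordinates $2$ and $3$ are identically equal to $1 = p_{i,2} = p_{i,3}$ at every step, we have $\|x^{(t)} - p_i\|_\infty = |x^{(t)}_1 - q_i|$, and Lemma~\ref{lem:exponential_example} hands back $|x^{(t)}_1 - q_i| \leq K_i^t \cdot c_i \epsilon_i$. There is no substantial obstacle in this argument; the only nontrivial bookkeeping is checking that the constant term $\ln 2$ inside $g$ is chosen precisely to make $g(0) = 1$, which is what keeps the two dummy coordinates stationary and collapses the three-dimensional problem to the scalar one.
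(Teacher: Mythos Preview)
Your proposal is correct and follows exactly the same approach as the paper's proof, which simply states that the lemma holds by combining Lemma~\ref{lem:exponential_example}, Lemma~\ref{lem:f_exp_act}, and Lemma~\ref{lem:f1_exp_dummy}. You have merely spelled out the composition explicitly: the choice $w_1 = [1,1,C-1]^\top$, $w_2 = w_3 = 0$, the verification $g(0)=1$ that fixes the dummy coordinates, the invariance of the slice $\R \times \{1\} \times \{1\}$, and the reduction of the $\ell_\infty$ error to the scalar bound.
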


\begin{proof}
    The lemma holds by combining Lemma~\ref{lem:exponential_example}, Lemma~\ref{lem:f_exp_act}, and Lemma~\ref{lem:f1_exp_dummy}.
\end{proof}

\subsection{Extension to \texorpdfstring{$d$}{}-dimensional Case}
Next, we extend Lemma~\ref{lem:f_exp_act} to $d$-dimensional case.

\begin{lemma}\label{lem:f_exp_act_d_dimension}
    Let
    \begin{align*}
        f_1(x):=&~ \exp(\langle u, x\rangle^3+(-2-3C)\langle u, x\rangle^2+(3C^2+4C)\langle u, x\rangle+\ln{2})-1
    \end{align*}
    where $C\approx -2.15$, $u := [1,\frac{1}{d-1},...,\frac{1}{d-1},\frac{1}{d-1}+C-1]^\top \in \R^{d}$. For all $i \in \{2,...,d\}$, let $f_i(x):=1$. Then there exists a function $g(z):= \exp(a_{3}z^3+a_{2}z^2 + a_{1}z+a_0)-1$, where $a_0,a_1,a_2,a_3 \in \R$ such that for some $w_1,w_2,...,w_d \in \R^{d}$, we have that for all $i \in [d]$
    \begin{align*}
         g(\langle w_i, x\rangle) = f_i(x)
    \end{align*}
    Moreover we have
    \begin{align*}
        g(z) = \exp(z^3+(-2-3C)z^2+(3C^2+4C)z+\ln{2})-1
    \end{align*}
    and $w_1 = [1,\frac{1}{d-1},...,\frac{1}{d-1},\frac{1}{d-1}+C-1] \in \R^d$, for all $i \in \{2,...,d\}, w_i = [0,...,0] \in \R^d$.
\end{lemma}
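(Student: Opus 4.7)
The plan is to prove this lemma by direct construction, mirroring the strategy used in Lemma~\ref{lem:f_exp_act}. I would explicitly set
\begin{align*}
    g(z) := \exp(z^3+(-2-3C)z^2+(3C^2+4C)z+\ln 2)-1,
\end{align*}
take $w_1 := u = [1,\tfrac{1}{d-1},\ldots,\tfrac{1}{d-1},\tfrac{1}{d-1}+C-1]^\top$, and take $w_i := [0,\ldots,0]^\top \in \R^d$ for every $i \in \{2,\ldots,d\}$. Then the required identities $g(\langle w_i,x\rangle) = f_i(x)$ reduce to two computations.

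First I would handle the case $i=1$: since $w_1 = u$, we have $\langle w_1, x\rangle = \langle u, x\rangle$, so substituting into $g$ directly produces the defining expression of $f_1(x)$. This step is essentially a tautology once the coefficients of $g$ are chosen to match the coefficients appearing in $f_1$, and the only thing that needs checking is that the coefficients $a_3=1$, $a_2=-2-3C$, $a_1=3C^2+4C$, $a_0=\ln 2$ are exactly those appearing inside the exponential of $f_1$.

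Second I would handle the case $i \in \{2,\ldots,d\}$. Here $\langle w_i, x\rangle = 0$ for every $x \in \R^d$, so
\begin{align*}
    g(\langle w_i, x\rangle) = g(0) = \exp(\ln 2) - 1 = 2 - 1 = 1 = f_i(x),
\end{align*}
which matches the required definition of $f_i$.

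There is no substantive obstacle in this proof; the entire content of the lemma is that the scalar polynomial $g$ constructed in Lemma~\ref{lem:f_exp_act} does not depend on the ambient dimension, and that the $d$-dimensional weight vector $u$ can still be realized by a single coordinate of weights paired with trivial zero weights in the other $d-1$ coordinates. The only subtlety worth mentioning is that the choice $u_d = \tfrac{1}{d-1}+C-1$ is not arbitrary: it is what makes $\langle u, [x_1,1,\ldots,1]^\top\rangle = x_1 + C$, which is the same reduction carried out in Lemma~\ref{lem:f1_exp_dummy} and which will be needed when this lemma is chained with the univariate analysis of Lemma~\ref{lem:exponential_example} to derive the $2^d$ robust fixed-point statement of Theorem~\ref{thm:exponential}.
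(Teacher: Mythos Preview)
Your proposal is correct and follows essentially the same approach as the paper: define $g$ with the stated coefficients, set $w_1=u$ and $w_i=0$ for $i\ge 2$, and verify the identities directly. Your write-up is in fact more explicit than the paper's, since you spell out the $g(0)=\exp(\ln 2)-1=1$ computation that the paper leaves as ``clear.''
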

\begin{proof}
    Let $g(z) = \exp(z^3+(-2-3C)z^2+(3C^2+4C)z+\ln{2})-1$. Let $w_1:= [1,\frac{1}{d-1},...,\frac{1}{d-1},\frac{1}{d-1}+C-1] \in \R^d$. For all $ i \in \{2,...,d\}$, let $w_i = [0,...,0] \in \R^d$. Then it is clear that for all $i \in [d]$ we have 
    \begin{align*}
        g(\langle w_i, x\rangle) = f_i(x)
    \end{align*}
    Thus we complete the proof.
\end{proof}
Then, we extend Lemma~\ref{lem:f1_exp_dummy} to $d$-dimensional case.
\begin{lemma}\label{lem:f1_exp_dummy_d_dimensional}
Let
    \begin{align*}
        f_1(x):=&~ \exp(\langle u, x\rangle^3+(-2-3C)\langle u, x\rangle^2+(3C^2+4C)\langle u, x\rangle+\ln{2})-1
    \end{align*}
    where $C \approx -2.15$, $u:= [1,\frac{1}{d-1},...,\frac{1}{d-1},\frac{1}{d-1}+C-1]^\top \in \R^d$. It holds that
    \begin{align*}
        f_1([x_1, \mathbf{1}_{d-1}^\top]^\top) = \exp(x_1^3-2x_1^2) -1
    \end{align*}
\end{lemma}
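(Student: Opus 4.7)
The plan is to mimic the calculation done for the 3-dimensional version in Lemma~\ref{lem:f1_exp_dummy}, with the only nontrivial difference being the inner-product bookkeeping in dimension $d$. First, I would compute $\langle u, [x_1, \mathbf{1}_{d-1}^\top]^\top \rangle$ directly from the definition of $u = [1, \tfrac{1}{d-1},\dots,\tfrac{1}{d-1}, \tfrac{1}{d-1}+C-1]^\top$. The first coordinate contributes $x_1$; the middle $d-2$ coordinates contribute $(d-2) \cdot \tfrac{1}{d-1}$; and the last coordinate contributes $\tfrac{1}{d-1}+C-1$. Adding these up, the $\tfrac{1}{d-1}$ pieces combine as $\tfrac{d-2}{d-1}+\tfrac{1}{d-1} = 1$, so the total collapses to $x_1 + C$, exactly as in the $3$-dimensional case.

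Next, I would substitute this back into the definition of $f_1$, obtaining
\begin{align*}
f_1([x_1, \mathbf{1}_{d-1}^\top]^\top) = \exp\bigl((x_1+C)^3 + (-2-3C)(x_1+C)^2 + (3C^2+4C)(x_1+C) + \ln 2 \bigr) - 1.
\end{align*}
Expanding the polynomial in $x_1$, the coefficient of $x_1^3$ is $1$; the coefficient of $x_1^2$ simplifies to $3C - 2 - 3C = -2$; the coefficient of $x_1$ telescopes to $3C^2 - 4C - 6C^2 + 3C^2 + 4C = 0$; and the constant term is $C^3 + 2C^2 + \ln 2$. The specific choice $C \approx -2.15$ is exactly a root of $C^3 + 2C^2 + \ln 2 = 0$ (this is what was implicitly used in the $3$-d version, and it is the reason $C$ was defined this way), so the constant vanishes and we are left with $\exp(x_1^3 - 2x_1^2) - 1$.

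The only real obstacle is verifying that the constant term really does vanish under the stated approximation $C \approx -2.15$; since $C$ is defined only approximately, one has to treat $C$ as the exact root of $z^3 + 2z^2 + \ln 2 = 0$ for the identity to hold exactly. Modulo this (and appealing to the same simplification already carried out in Lemma~\ref{lem:f1_exp_dummy}), the rest of the argument is a mechanical expansion identical to the 3-dimensional case and requires no new ideas beyond the inner-product reduction above.
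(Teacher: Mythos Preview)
Your proposal is correct and follows essentially the same approach as the paper: first reduce the inner product to $x_1 + C$ via the coordinate bookkeeping, then substitute and simplify. You are in fact more explicit than the paper (which just writes ``expanding the higher-order terms and simplifying it''), correctly isolating that the constant $C\approx -2.15$ must be chosen as the real root of $z^3 + 2z^2 + \ln 2 = 0$ for the identity to hold exactly.
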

\begin{proof}
Note that $\langle u, [x_1, \mathbf{1}_{d-1}^\top]^\top\rangle = x_1+C$. Hence
\begin{align*}
    f_1([x_1, \mathbf{1}_{d-1}^\top]^\top) =&~ \exp((x_1 + C)^3 + (-2-3C)(x_1+C)^2 + (3C^2+4C)(x_1+C)+\ln{2})-1\\
            =&~ \exp(x_1^3-2x_1^2)-1
\end{align*}
where the first step follows from $\langle u, [x_1, \mathbf{1}_{d-1}^\top]^\top\rangle = x_1+C$, and the second step follows from expanding the higher-order terms and simplifying it.
\end{proof}

Then, we extend Lemma~\ref{lem:exp_nn}  to $d$-dimensional case.
\begin{lemma}\label{lem:exp_nn_d_dimensional}
    Let $W := [w_1, w_2,..., w_d] \in \R^{d \times d}$ be a weight matrix. Let $C := -2.15$ be a constant. Let a exponential activation function $g:\R \to \R$ be defined as
    \begin{align*}
        g(z) :=&~ \exp(z^3+(-2-3C)z^2 +(3C^2+4C)z+\ln{2})-1.
    \end{align*}
    We define one layer of a neural network, denoted as $f$, as follows:
    \begin{align*}
        f(x; W) := &~ [g(\langle w_1, x \rangle), g(\langle w_2, x \rangle), ..., g(\langle w_d, x \rangle)]^\top.
    \end{align*}
Then there exists a weight matrix $W$ following statements hold:
\begin{itemize}
    \item The function $f(x;W)$ has at least two fixed points which can be found by the fixed-point iteration, and we denoted them as $p_1, p_2$. 
    \item For $i \in \{1,2\}$, there exists a constant $\epsilon_i > 0$, for any initial point $x^{(0)} \in [p_{i,1} - \epsilon_i, p_{i,1} + \epsilon_i] \times \{1\} \times \{1\}$, the fixed-point iteration $x^{(t)} = f(x^{(t-1)}; W)$ converges to the fixed point $p_i$.
    \item For $i \in \{1,2\}$, there exists a constant $c_i > 0$ and a constant $K_i \in [0,1)$ such that for any $t \geq 2$,
    \begin{align*}
        \| x^{(t)} - p_i \|_\infty \leq K_i^t \cdot c_i \epsilon_i.
    \end{align*}
\end{itemize}
\end{lemma}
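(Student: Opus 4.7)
The plan is to reduce the $d$-dimensional statement to the scalar case of Lemma~\ref{lem:exponential_example} via the dummy-variable trick already worked out in the supporting lemmas. First, I would choose the weight matrix exactly as prescribed in Lemma~\ref{lem:f_exp_act_d_dimension}: take $w_1 := [1, \tfrac{1}{d-1}, \ldots, \tfrac{1}{d-1}, \tfrac{1}{d-1}+C-1]^\top$ and $w_i := \mathbf{0}_d$ for every $i \in \{2, \ldots, d\}$. Then for each $i \in [d]$, the coordinate function $g(\langle w_i, x\rangle)$ coincides with the $f_i$ defined in Lemma~\ref{lem:f_exp_act_d_dimension}. In particular, for $i \geq 2$ we get $g(0) = \exp(\ln 2) - 1 = 1$, so coordinates $2$ through $d$ of $f(x;W)$ are identically equal to $1$.

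Next, I would restrict attention to the affine slice $S := \mathbb{R} \times \{1\}^{d-1}$. Since the last $d-1$ coordinates of $f(x;W)$ are constantly $1$, the slice $S$ is invariant under $f(\cdot;W)$, so the fixed-point iteration starting in $S$ stays in $S$ forever. On this slice, Lemma~\ref{lem:f1_exp_dummy_d_dimensional} gives
\begin{align*}
    f_1([x_1, \mathbf{1}_{d-1}^\top]^\top) = \exp(x_1^3 - 2x_1^2) - 1,
\end{align*}
which is precisely the scalar function analyzed in Lemma~\ref{lem:exponential_example}. Thus the dynamics on $S$ reduce exactly to the scalar dynamics $x_1^{(t)} = \exp((x_1^{(t-1)})^3 - 2(x_1^{(t-1)})^2) - 1$.

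Finally, I would invoke Lemma~\ref{lem:exponential_example} to obtain two scalar fixed points $p_1^\star, p_2^\star$ (near $0$ and near $-0.910$) together with neighborhoods $[p_i^\star - \epsilon_i, p_i^\star + \epsilon_i]$, contractivity constants $K_i \in [0,1)$, and constants $c_i > 0$ delivering the contraction bound $|x_1^{(t)} - p_i^\star| \leq K_i^t c_i \epsilon_i$. Lifting these back to $\mathbb{R}^d$, set $p_i := [p_i^\star, 1, \ldots, 1]^\top$. For any initial point $x^{(0)} \in [p_{i,1}-\epsilon_i, p_{i,1}+\epsilon_i] \times \{1\} \times \{1\}$ (and its natural $d$-dimensional analogue), all coordinates $2,\ldots,d$ agree with $p_i$ immediately after one iteration, while the first coordinate satisfies the scalar error bound. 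Since all deviations from $p_i$ live in the first coordinate, $\|x^{(t)} - p_i\|_\infty = |x_1^{(t)} - p_i^\star| \leq K_i^t c_i \epsilon_i$, which is exactly the claimed bound.

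I do not anticipate any substantive obstacle here: the delicate work (constructing $g$ so that the rescaled inner product produces the target polynomial, and verifying the scalar contraction constants for both fixed points) was already done in Lemma~\ref{lem:exponential_example}, Lemma~\ref{lem:f_exp_act_d_dimension}, and Lemma~\ref{lem:f1_exp_dummy_d_dimensional}. The only thing to be careful about is bookkeeping the initialization set in the third bullet — as written it mentions only the first-coordinate neighborhood together with $\{1\} \times \{1\}$, so I would state the invariant slice explicitly and note that the remaining coordinates of the initialization are fixed at $1$ so that the reduction to the scalar iteration is exact from step one onward.
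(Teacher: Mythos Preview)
Your proposal is correct and follows exactly the paper's approach: the paper's proof consists of a single sentence stating that the lemma holds by combining Lemma~\ref{lem:exponential_example}, Lemma~\ref{lem:f_exp_act_d_dimension}, and Lemma~\ref{lem:f1_exp_dummy_d_dimensional}, and you have simply spelled out what that combination entails (choosing $W$ as in Lemma~\ref{lem:f_exp_act_d_dimension}, restricting to the invariant slice $\{x_2=\cdots=x_d=1\}$ via Lemma~\ref{lem:f1_exp_dummy_d_dimensional}, and invoking the scalar contraction from Lemma~\ref{lem:exponential_example}).
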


\begin{proof}
    The lemma holds by combining Lemma~\ref{lem:exponential_example}, Lemma~\ref{lem:f_exp_act_d_dimension}, and Lemma~\ref{lem:f1_exp_dummy_d_dimensional}.
\end{proof}




\end{document}